\pgfplotsset{compat=1.18}
\tikzstyle{every state}=[draw=black,text=black,inner color= white,outer color= white,draw= black,text=black]
\tikzstyle{place}=[thick,draw=sthlmBlue,fill=blue!20,minimum size=8mm, opacity=.5]
\tikzstyle{red place}=[square,place, draw=sthlmRed, fill=sthlmLightRed]
\tikzstyle{green place}=[diamond, place, draw=sthlmGreen, fill=sthlmLightGreen]
\tikzset{chance/.style={state,place}}
\tikzset{maxnod/.style={state,red place}}
\tikzset{minnod/.style={state,green place}}
\tikzset{termin/.style={align=left}}
	\definecolor{sthlmLightBlue}{RGB}{214,237,252} % HEX #d6edfc
	\definecolor{sthlmBlue}{RGB}{0,110,191} % HEX #006ebf
	\definecolor{sthlmLightGreen}{RGB}{213,247,244} % HEX #0d5f7f4
	\definecolor{sthlmGreen}{RGB}{0,134,127} % #00867f
	\definecolor{sthlmLightGrey}{RGB}{213,217,225} %HEX #D5D9E1
	\definecolor{sthlmGrey}{RGB}{245,243,238} % HEX #f5f3ee
	\definecolor{sthlmDarkGrey}{RGB}{51,51,51} % HEX #333333
	\definecolor{sthlmLightOrange}{RGB}{255,215,210} % HEX #ffd7d2
	\definecolor{sthlmOrange}{RGB}{221,74,44} % HEX #dd4a2c
	\definecolor{sthlmLightPurple}{RGB}{241,230,252} % HEX #f1e6fc
	\definecolor{sthlmPurple}{RGB}{93,35,125} % HEX #5d237d
	\definecolor{sthlmLightRed}{RGB}{254,222,237} % HEX #c40064
	\definecolor{sthlmRed}{RGB}{196,0,100} % HEX #fedeed
	\definecolor{sthlmYellow}{RGB}{252,191,10} % HEX #fcbf0a
\def\ie{{\em i.e.}\xspace}
\declaretheorem[name=\textbf{Proposition}]{proposition} 
\declaretheorem[name=Definition]{definition}
\newcommand{\PerfectAlgo}{PerfectAlgo}
\newcommand{\score}{score}
\newcommand{\applyMove}{ApplyMove}
\newcommand{\noop}{\constante{noop}}
\newcommand{\paper}{\constante{Paper}}
\newcommand{\scissor}{\constante{Scissors}}
\newcommand{\rock}{\constante{Rock}}
\newcommand{\sampling}{\textalgo{Sampling}}
\newcommand{\obsplays}{\constante{Play}}
\newcommand{\timestep}{t}
\newcommand{\agent}{i}
\newcommand{\Time}[2]{#1^{#2}}
\newcommand{\AgentPov}[2]{#1_{#2}}
\newcommand{\TimeAgentPov}[3]{#1^{#2}_{#3}}
\newcommand{\games}{G}
\newcommand{\nbplayer}{\mathcal{N}}
\newcommand{\worldstateTime}[1]{\Time{w}{#1}}
\newcommand{\worldstate}{\worldstateTime{}}
\newcommand{\worldstateSet}{\mathcal{W}}
\newcommand{\actionTimePov}[2]{\TimeAgentPov{a}{#1}{#2}}
\newcommand{\actionTime}[1]{\actionTimePov{#1}{}}
\newcommand{\action}{\actionTimePov{}{}}
\newcommand{\actionSetPov}[1]{\AgentPov{\mathcal{A}}{#1}}
\newcommand{\actionSet}{\actionSetPov{}}
\newcommand{\actionSetPovCond}[2]{\actionSetPov{#1} (#2)}
\newcommand{\infostateTimePov}[2]{\TimeAgentPov{s}{#1}{#2}}
\newcommand{\infostate}{\infostateTimePov{}{}}
\newcommand{\infostateSetPov}[1]{\AgentPov{\mathcal{I}}{#1}}
\newcommand{\infostateSet}{\infostateSetPov{}}
\newcommand{\infostateSetPovCond}[2]{\infostateSetPov{#1} (#2)}
\newcommand{\obsPrivTimePov}[2]{\TimeAgentPov{o}{#1}{#2}}
\newcommand{\obsPrivSetPov}[1]{\AgentPov{\mathcal{O}}{#1}}
\newcommand{\obsPrivSetPovCond}[2]{\obsPrivSetPov{#1} (#2)}
\newcommand{\budget}{budget}
\newcommand{\depth}{d}
\newcommand{\subgame}{U}
\newcommand{\policyTimePov}[2]{\TimeAgentPov{\pi}{#1}{#2}}
\newcommand{\policy}{\policyTimePov{}{}}
\newcommand{\policyTimePovCond}[4]{\policyTimePov{#1}{#2} (#3|#4)}
\newcommand{\policySetTimePov}[2]{\TimeAgentPov{\Pi}{#1}{#2}}
\newcommand{\policySet}{\policySetTimePov{}{}}
\newcommand{\policySetTimePovCond}[3]{\policySetTimePov{#1}{#2} (#3)}
\newcommand{\textalgo}[1]{\mathrm{{#1}}}
\newcommand{\constante}[1]{\mathrm{{#1}}}
\newcommand{\historyTime}[1]{\Time{h}{#1}}
\newcommand{\historyTimeSet}[1]{\Time{\mathcal{H}}{#1}}
\newcommand{\historyTimeSetCond}[2]{\historyTimeSet{#1} (#2)}
\def\BibTeX{{\rm B\kern-.05em{\sc i\kern-.025em b}\kern-.08em
    T\kern-.1667em\lower.7ex\hbox{E}\kern-.125emX}}
\begin{document}

\title{Perfect Information Monte Carlo with Postponing Reasoning
% \thanks{Identify applicable funding agency here. If none, delete this.}
}

\author{\IEEEauthorblockN{Jérôme Arjonilla}
\IEEEauthorblockA{\textit{Université Paris Dauphine - PSL} \\
Paris, France \\
jerome.arjonilla@hotmail.fr}
\and
\IEEEauthorblockN{Abdallah Saffidine}
\IEEEauthorblockA{\textit{Potassco Solutions} \\
Potsdam, Germany \\
abdallah.saffidine@gmail.com}
\and
\IEEEauthorblockN{Tristan Cazenave}
\IEEEauthorblockA{\textit{Université Paris Dauphine - PSL} \\
Paris, France \\
tristan.cazenave@dauphine.psl.eu}
}

\IEEEoverridecommandlockouts
%\IEEEpubid{\makebox[\columnwidth]{ 979-8-3503-5067-8/24/\$31.00~\copyright2024 IEEE \hfill} %–> insert the copyright option applicable from above.
%\hspace{\columnsep}\makebox[\columnwidth]{ }}
% Add the following code after the \\make title command:
\IEEEpubidadjcol

\maketitle

\begin{abstract}

    Imperfect information games, such as Bridge and Skat, present challenges due to state-space explosion and hidden information, posing formidable obstacles for search algorithms. 
    Determinization-based algorithms offer a resolution by sampling hidden information and solving the game in a perfect information setting, facilitating rapid and effective action estimation. 
    However, transitioning to perfect information introduces challenges, notably one called strategy fusion.
    This research introduces `Extended Perfect Information Monte Carlo' (EPIMC), an online algorithm inspired by the state-of-the-art determinization-based approach Perfect Information Monte Carlo (PIMC). 
    EPIMC enhances the capabilities of PIMC by postponing the perfect information resolution, reducing alleviating issues related to strategy fusion. 
    However, the decision to postpone the leaf evaluator introduces novel considerations, such as the interplay between prior levels of reasoning and the newly deferred resolution.  
    In our empirical analysis, we investigate the performance of EPIMC across a range of games, with a particular focus on those characterized by varying degrees of strategy fusion. 
    Our results demonstrate notable performance enhancements, particularly in games where strategy fusion significantly impacts gameplay.
    Furthermore, our research contributes to the theoretical foundation of determinization-based algorithms addressing challenges associated with strategy fusion.%, thereby enhancing our understanding of these algorithms within the context of imperfect information game scenarios.

\end{abstract}

\begin{IEEEkeywords}
Imperfect Information Games, Search Algorithm, Determinization, Strategy Fusion
\end{IEEEkeywords}

\section{Introduction}

Search algorithms in artificial intelligence have significantly evolved, demonstrating superhuman performance in games such as Chess, Go~\cite{Silver2016MasteringTG}, Poker~\cite{brown_superhuman_2019}, Skat~\cite{furtak_recursive_2013}, and Contract Bridge\cite{bouzy_recursive_2020}. 
Perfect information games, like Chess and Go, where all information is available, have been extensively studied, allowing algorithms to surpass human professionals~\cite{Silver2016MasteringTG, silver_mastering_2017, silver_general_2018}. 
In contrast, imperfect information games, including Poker, Skat, and Bridge, where some information is hidden, have been less studied, with only a few algorithms capable of outperforming professional human players~\cite{brown_superhuman_2019}.

In imperfect information games, two commonly used search methods are regret-based approaches, which excel in Poker and are theoretically convergent but slower~\cite{neller2013introduction, tammelin_solving_2015, lanctot_monte_2009}, and determinization-based methods, considered state-of-the-art in various trick-taking card games, offering scalability but lacking theoretical guarantees~\cite{long_understanding_2010, furtak_recursive_2013, cowling_information_2012, cazenave_alpha_2021}. 
In recent years, both methods have incorporated neural networks to enhance performance and facilitate scalability on large games~\cite{moravcik_deepstack_2017, brown_superhuman_2019, jiang_deltadou_2019, zhang_combining_2021,schmid_player_2021}.

Determinization-based algorithms, like PIMC, operate by sampling hidden information based on current knowledge and using a \emph{perfect information leaf evaluator} to predict game outcomes under perfect information assumptions. 
These algorithms achieve state-of-the-art performance because solving problems with perfect information is inherently simpler than dealing with imperfect information. 
% However, they face a significant challenge known as 'strategy fusion'~\cite{frank_search_1998, long_understanding_2010}, which arises when the perfect information leaf evaluator independently solves each possible world without considering the uncertainties of games with imperfect information.
Despite their state-of-the-art performance, determinization-based algorithms face challenges, notably encountering `strategy fusion'~\cite{frank_search_1998, long_understanding_2010}.
This challenge arises from the use of the perfect information leaf evaluator, which independently solves each possible world without considering the uncertainties induced by games with imperfect information.

Within determinization-based algorithms, Perfect Information Monte Carlo (PIMC)~\cite{long_understanding_2010} is particularly susceptible to strategy fusion due to its early usage on the perfect information leaf evaluator in decision-making. 
Our study addresses this challenge by postponing the leaf evaluator until a depth $\depth$, which mitigates the impact of strategy fusion. 
The act of postponing the perfect information leaf evaluator at a depth of $\depth$ introduces new considerations, specifically, it prompts the need for alternative strategies to reason from step $1$ to step $\depth$.
We formally define the problem of strategy fusion, demonstrating that, in the worst case, increasing depth $\depth$ does not exacerbate strategy fusion, and in every case, there exists a depth $\depth$ that strictly reduces it. 
For finite games, there is a depth $\depth$ that eliminates strategy fusion entirely.

Section~\ref{sec:notAndBack} covers notation and a detailed explanation of determinization-based algorithms, specifically addressing PIMC and the strategy fusion challenge. 
Section~\ref{sec:E} introduces Extended PIMC, incorporating our idea of postponing the leaf evaluator at depth $\depth$, which operates online without initial costs, making it suitable for diverse games or General Game Playing~\cite{schofield_general_2019}. 
It is noteworthy that, like other modern determinization algorithms, there exists the potential to integrate neural networks for enhanced performance.
Section~\ref{sec:theoretical} presents the theoretical results of increasing depth $\depth$ in determinization-based algorithms. 
Section~\ref{sec:Results} showcases experimental findings across various games, highlighting cases with significant strategy fusion where deeper reasoning improves performance beyond other state-of-the-art methods. 
Section~\ref{sec:RelatedWork} reviews related research, and Section~\ref{sec:conclusion} summarizes our contributions and future research directions.

\newcommand{\wa}{\worldstateTime{a}}
\newcommand{\wb}{\worldstateTime{b}}
\newcommand{\wc}{\worldstateTime{c}}
\newcommand{\wwd}{\worldstateTime{d}}

\begin{figure}[!htbp]
    \centering

    \begin{tikzpicture}[->,-latex,auto,node distance=2cm,semithick, square/.style={regular polygon,regular polygon sides=4}] %
    
    \node[maxnod] (n1) at (0,0.6) {$\wa$};
    
    \node[minnod] (n2) at (-2.5,-2) {$\wb$};
    \node[minnod] (n3) at (0,-2) {$\wc$};
    \node[minnod] (n4) at (2.5,-2) {$\wwd$};
    
    \node[termin] (b1) at (-4,-2) {$-0.6$};
    \node[termin] (b2) at (-3.25,-3.5) {$0$};
    \node[termin] (b3) at (-2.5,-4) {$-1$};
    \node[termin] (b4) at (-1.75,-3.5) {$1$};
    \node[termin] (b5) at (-0.75,-3.5) {$1$};
    \node[termin] (b6) at (0,-4) {$0$};
    \node[termin] (b7) at (0.75,-3.5) {$-1$};
    \node[termin] (b8) at (1.75,-3.5) {$-1$};
    \node[termin] (b9) at (2.5,-4) {$1$};
    \node[termin] (b10) at (3.25,-3.5) {$0$};
    
    \path[very thin] 
        (n2) edge  [out=40, in=140, dashed,-, green] node {} (n3)
        (n3) edge  [out=40, in=140, dashed,-, green] node {} (n4);

    \draw (n1) -- node[midway,left] {R} ++ (n2);
    \draw (n1) -- node[midway,left] {P} ++(n3);
    \draw (n1) -- node[midway,left] {S} ++(n4);
    \draw (n1) -- node[midway,above] {L} ++(b1);

    \draw (n2) -- node[midway,left] {R} ++ (b2);
    \draw (n2) -- node[midway,left] {P} ++(b3);
    \draw (n2) -- node[midway,right] {S} ++ (b4);
    
    \draw (n3) -- node[midway,left] {R} ++ (b5);
    \draw (n3) -- node[midway,left] {P} ++(b6);
    \draw (n3) -- node[midway,right] {S} ++ (b7);
    
    \draw (n4) -- node[midway,left] {R} ++ (b8);
    \draw (n4) -- node[midway,left] {P} ++ (b9);
    \draw (n4) -- node[midway,right] {S} ++ (b10);

    \end{tikzpicture}

    \caption{Variant of `Rock-Paper-Scissors'. The red/green square/diamond is the first/second player and the dashed line represents worlds indistinguishable by the second player.}
~\label{fig:example}
\end{figure}
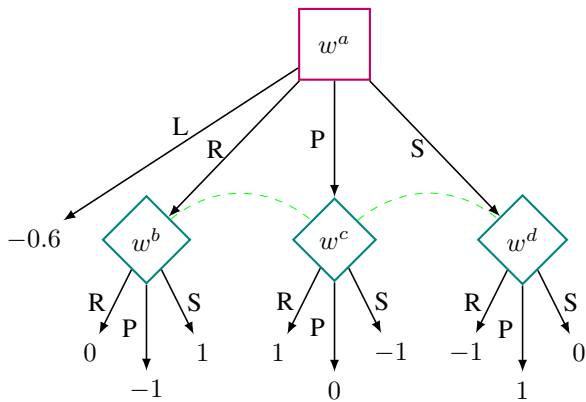

\section{Notation and Background}
~\label{sec:notAndBack}
\subsection{Notation}
~\label{sub:notation}

Throughout the paper, we use the formalism of Factored-Observation Stochastic Games (FOSG)\cite{kovarik_rethinking_2022} and utilize Figure~\ref{fig:example}, a variant of `Rock-Paper-Scissors' to present the notations. 
Our notation employs subscripts to denote players.

A game, denoted as $\games$, involves $\nbplayer$ players, initiating at $\worldstateTime{init}$ and progressing through successor world states represented by $\worldstate \in \worldstateSet$. 
Players make joint actions, denoted as $\actionTime{}=(\actionTimePov{}{1},\dots, \actionTimePov{}{\nbplayer}) \in \actionSetPovCond{}{\worldstate}$, in each world state $\worldstate$, and the game continues until a terminal state is reached. 
Upon choosing a joint action $\actionTime{}$, players observe rewards, and the next world state $\worldstateTime{\prime}$ is determined probabilistically. 
During this transition from $\worldstateTime{}$ to $\worldstateTime{\prime}$, each player receives an observation denoted as $\obsPrivTimePov{}{\agent} \in \obsPrivSetPovCond{\agent}{\worldstateTime{},\actionTime{},\worldstateTime{\prime}}$, where $\obsPrivSetPovCond{\agent}{\worldstateTime{},\actionTime{},\worldstateTime{\prime}}$ represents the set of possible observations for player $\agent$.
\\\\
In Figure~\ref{fig:example}, the first player faces the choice between `Leave' with a reward of $-0.6$ or engaging in the game. 
If it opts to play, the standard rules apply: $\rock$ beats $\scissor$, $\scissor$ beats $\paper$, and $\paper$ beats $\rock$, with wins yielding $1$, losses resulting in $-1$, and ties providing $0$.
The game encompasses four possible world states denoted as $\worldstateSet={\wa,\wb,\wc,\wwd}$, where $\wa$ is the initial state and $\wb$, $\wc$, and $\wwd$ follow the first action. 

The first player has four possible actions in $\wa$ and a null action in other states—specifically, $\actionSetPovCond{1}{\wa}=$\{$\textalgo{Leave}$, $\rock$, $\paper$, $\scissor$\} and $\actionSetPovCond{1}{\wb}=\actionSetPovCond{1}{\wc}=\actionSetPovCond{1}{\wwd}={\noop}$. 
The second player has a null action in $\wa$ and three actions in other states—namely, $\actionSetPovCond{2}{\wa}={\emptyset}$ and $\actionSetPovCond{2}{\wb}=\actionSetPovCond{2}{\wc}=\actionSetPovCond{2}{\wwd}={\rock,\paper,\scissor}$.
In this game, the second player receives the observation $\obsplays$ if the first player plays $\paper$, $\scissor$, or $\rock$. Rewards for both players are obtained at the game's conclusion.
\\\\
A \emph{history} is a finite sequence of world states and legal actions denoted as $ \historyTime{\timestep} = (\worldstateTime{0},\actionTimePov{0}{},\worldstateTime{1},\actionTimePov{1}{}, \ldots, \worldstateTime{t})$. 
An \emph{infostate} $\infostateTimePov{}{\agent}$ is a sequence of an agent’s observations and actions, denoted as $\infostateTimePov{\timestep}{\agent} = (\actionTimePov{0}{\agent}, \obsPrivTimePov{1}{\agent}, \actionTimePov{1}{\agent}, \dots, \obsPrivTimePov{\timestep-1}{\agent})$.
For each history, there exists a unique infostate noted as $\infostateTimePov{}{\agent}(\historyTime{})$, and for each infostate, there is a set of histories that match the sequence denoted as $\historyTimeSetCond{}{\infostate}$. 
$\infostateSetPovCond{}{\games}$ represents the set of possible infostates in the game $\games$, and $\policySetTimePovCond{}{}{\games}$ is the set of possible policies in the game $\games$, where a \emph{policy} $\policy$ is a function mapping every history $\historyTime{}$ to a probability distribution over actions.
\\\\
In Figure~\ref{fig:example}, if we examine the history $\historyTime{1}=(\wa,\rock,\wb)$, considering it from the second player's perspective, the infostate becomes $\infostateTimePov{1}{2}=(\noop,\obsplays)$ since no action was taken by the second player, and the observation $\obsplays$ was noted. 
The set of histories that correspond to $\infostateTimePov{1}{2}$ includes \{$(\wa,\rock,\wb)$; $(\wa,\paper,\wc)$; $(\wa,\scissor,\wwd)$\}.

\subsection{Determinization-based algorithm}
~\label{sub:pimc}

Each determinization-based algorithm has its own characteristics, nevertheless, they share some common features such as (\romannumeral1) \emph{samples} a history $\historyTime{}$ according to a probability distribution over the current infostate $\infostate$; (\romannumeral2) uses a \emph{perfect information leaf evaluator} for estimating the value of the sampled world state. 
In the description of the Algorithm~\ref{alg:pimc} and~\ref{alg:PIMC2} (\romannumeral1) is noted $\sampling(\infostate)$ and (\romannumeral2) is noted $\PerfectAlgo(\historyTime{})$.

A perfect information leaf evaluator is an algorithm used in games with perfect information to estimate the value of a history $\historyTime{}$. 
It predicts the outcome of a game from a specific position. %, assuming optimal moves by both players. 
This evaluator can be exhaustive methods like Minimax with Alpha-Beta pruning~\cite{knuth_analysis_1975}, heuristics methods like Random Rollout (also called \emph{playouts}~\cite{browne_survey_2012}) or neural network~\cite{Silver2016MasteringTG, silver_mastering_2017, silver_general_2018}.

Determinization-based algorithms are simple and, in practice, achieve great results. Yet, certain problems are encountered such as (\romannumeral1) non-locality and strategy-fusion~\cite{frank_search_1998,long_understanding_2010}; (\romannumeral2) revealing private hidden information~\cite{frank_search_1998,arjonilla2023mixture}; (\romannumeral3) no theoretical guarantees. 
\\\\
In Figure~\ref{fig:example}, when conducting sampling from the infostate $\infostateTimePov{1}{2}$, the available options include sampling $\wb$, $\wc$, or $\wwd$. 
Utilizing a perfect information leaf evaluator, such as Minimax, on the world state $\wb$ would yield a value of $-1$, as the second player optimally plays $\paper$ to maximize their score \ie, minimizes the value of the first player.

\subsection{Strategy fusion}
~\label{sub:strat_fusion}

In imperfect information games, histories stemming from the same infostate must be approached with the same strategy, as players cannot distinguish between them. 
Formally, $\forall \infostate \in \infostateSetPovCond{}{\games}, \forall \historyTime{},\historyTime{'} \in \historyTimeSetCond{}{\infostate}, \policyTimePovCond{}{}{\cdot}{\historyTime{}} = \policyTimePovCond{}{}{\cdot}{\historyTime{'}}$.

However, determinization-based algorithms deviate from this principle. 
They employ a perfect information leaf evaluator algorithm to estimate the value at each sampled history. 
In other words, each history originating from the same infostate is solved using a strategy tailored to that specific history. 
Formally, $\forall \infostate \in \infostateSetPovCond{}{\games}, \forall \historyTime{},\historyTime{'} \in \historyTimeSetCond{}{\infostate}, \policyTimePovCond{}{}{\cdot}{\historyTime{}} \neq \policyTimePovCond{}{}{\cdot}{\historyTime{'}}$.
\\\\
In Figure~\ref{fig:example}, the strategies in $\wb$, $\wc$, and $\wwd$ must be identical since they originate from the same infostate $\infostateTimePov{1}{2}$. For the second player, the optimal strategy results in an average score of $0$ (playing with the same probability for all three actions). 
By back-propagating, the first player opts for $\obsplays$, leading to an average score of $0$, instead of choosing $\textalgo{Leave}$, which results in $-0.6$.

From the perspective of a determinization-based algorithm, a policy is tailored to the sampled history. 
The best policy for the second player is $\paper$ in $\wb$, $\scissor$ in $\wc$, and $\rock$ in $\wwd$. 
In $\wb$, $\wc$, and $\wwd$, playing the best policy yields $-1$. 
By back-propagating, the first player concludes that opting for $\mathrm{Leave}$ to obtain $-0.6$ is preferable compared to playing and receiving $-1$.

\subsection{Perfect Information Monte Carlo}

Perfect Information Monte Carlo (PIMC) is a determinization-based algorithm that is the state-of-the-art of many imperfect information games.

\RestyleAlgo{ruled}
\begin{algorithm}
\caption{PIMC}\label{alg:pimc}
\SetKwFunction{FMain}{PIMC}
\SetKwProg{Fn}{Function}{:}{}
\Fn{\FMain{$\infostate $}}{
    \For{$\action$ $\in$ $\actionSet$ ($\infostate)$}{
        $\score$[$\action$] $\gets$ $0$\;
    }
    \While{$\budget$} {
        $\worldstate$ $\gets$ $\sampling(\infostate)$\;
        
        \For{$\action$ $\in$ $\actionSet$ ($\worldstate$)}{
            $\worldstateTime{\prime}$ $\gets$ $\worldstate.\applyMove(\action)$\;
            
            $\score$ [$\action$] $\gets$ $\score$[$\action$] + $\PerfectAlgo(\worldstateTime{\prime}$)\;
        }
    }
    \Return Returns the best action on average\;
}
\end{algorithm}

PIMC is defined in Algorithm~\ref{alg:pimc} and works as follows (\romannumeral1) samples a world state $\worldstate$ according to the probability distribution over the current infostate $\infostate$; (\romannumeral2) plays an action $\action$ on the world state $\worldstate$, and observes the next world state $\worldstateTime{\prime}$; (\romannumeral3) estimates the world state $\worldstateTime{\prime}$ by using the perfect information leaf evaluator; (\romannumeral4) repeats until the $\budget$ is over; (\romannumeral5) selects the action that produces the best results in average. 

\section{Extended PIMC}
~\label{sec:E}

As presented, employing the perfect information leaf evaluator results in strategy fusion. 
In the case of PIMC, this evaluator is utilized after the first action is played.
However, there are no inherent constraints preventing the resolution after the first action, and it is not difficult to believe that postponing the use of the perfect information leaf evaluator could mitigate the issue of strategy fusion.

In the following, we introduce a novel algorithm that embraces this straightforward concept of postponing the leaf evaluator's utilization. 
The algorithm, termed `EPIMC' (Extended Perfect Information Monte Carlo), extends the PIMC paradigm by incorporating Extended reasoning at a depth of $\depth$, where the instance of $\depth=1$ corresponds to PIMC.

\newcommand{\ExplorationStrategy}{\textalgo{RandomAction}}

\RestyleAlgo{ruled}
\begin{algorithm}
\caption{Extended PIMC}\label{alg:PIMC2}
\SetKwFunction{FMain}{$\textalgo{Extended PIMC}$}
\SetKwProg{Fn}{Function}{:}{}
\Fn{\FMain{$\depth$, $ \infostate$}}{

    %Create $u$\;
    Create the game $\subgame$ and $u$, the initial world state\; 
    
    \While{$\budget$}{
        $\worldstate$ $\gets$ $\sampling(\infostate)$\;      $\textalgo{Query}$({$\subgame$,$u$, $\worldstate$, $\depth$})\;
    }
    
    \Return $\textalgo{ImperfectAlgo}$($\subgame$)\;
}

\vspace{5pt}

\SetKwFunction{FMain}{$\textalgo{Query}$}
\SetKwProg{Fn}{Function}{:}{}
\Fn{\FMain{$\subgame$, $u$, $\worldstate$, $\depth$}}{

    \If{$\depth == 0$ or $ \worldstate.\textalgo{IsTerminal}()$}
    {
        $u.\textalgo{value}$ $\gets$ $u.\textalgo{value}$ +  $\textalgo{\PerfectAlgo}$($\worldstate$)\;
        \Return \;
    }

    \iffalse
    \For{$\textalgo{\action}$ $\in$ $\textalgo{\ExplorationStrategy}$($\textalgo{\worldstate})$}
    {
        $\textalgo{\worldstate}^\prime$ $\gets$ $\textalgo{\worldstate}.\applyMove(\action)$\;
        Get $\textalgo{u}^\prime$ associated with $\textalgo{\worldstate}^\prime$\;
        Update dynamics in $\textalgo{\subgame}$ according to $\textalgo{u}^\prime$ and $\textalgo{u}$\;
        $\textalgo{Query}$({$\textalgo{\subgame}$, $\textalgo{u^\prime}$, $\textalgo{\worldstateTime{\prime}}$, $\textalgo{\depth -1}$})\;
    }
    \fi

    $\worldstateTime{\prime}$ $\gets$ $\worldstate.\applyMove(\ExplorationStrategy(\worldstate))$\;
    Get $u^\prime$ associated with $\worldstateTime{\prime}$\;
    Update dynamics in $\subgame$ according to $u^\prime$ and $u$\;
    $\textalgo{Query}$({$\subgame$, $u^\prime$, $\worldstateTime{\prime}$, $\depth -1$})\;
}

\end{algorithm}

The pseudo-code is available in Algorithm~\ref{alg:PIMC2} and works as follows (\romannumeral1) creates a subgame game $\subgame$; (\romannumeral2) samples a world state $\worldstate$ according to the probability distribution over the current infostate $\infostate$; (\romannumeral3) plays an action $\action$ on $\worldstate$, observes the next world state $\worldstateTime{\prime}$ continue until $\worldstateTime{\prime}$ is a terminal node or the depth $\depth$ is obtained, and updates the dynamic in subgame $\subgame$; (\romannumeral4) estimates the world state $\worldstateTime{\prime}$ by using the perfect information leaf evaluator; (\romannumeral5) repeats phases 2 to 4 until the $\budget$ is over; (\romannumeral6) solves the subgame $\subgame$ which have been created during phase 2 to 5 by using an algorithm that does not create strategy fusion (\ie an algorithm that do not use a perfect leaf evaluator to resolve the subgame $\subgame$).

In the following, a few points are clarified.

\subsection*{Exploration Strategy}

% Importantly, Algorithm~\ref{alg:PIMC2} incorporates $\ExplorationStrategy$, a feature absent in the original PIMC Algorithm~\ref{alg:pimc}. 
In the original PIMC, world state evaluations occur a maximum of $|A|$ times during each sampling phase, with $|A|$ denoting the highest feasible action count. 
However, applying the same approach at depth $d$ could potentially lead to evaluating $|A|^d$ world states. 
Therefore, careful consideration of the number of actions per step becomes imperative.
In EPIMC, our chosen strategy is to explore one action per sampling iteration, resulting in a singular world state evaluation per sampling instance, however, other choices could have been envisaged.

\subsection*{Depth}

While theoretically, any depth $\depth$ could be employed, practical considerations necessitate a prudent choice of $\depth$. 
Selecting a depth $\depth$ that is too small can exacerbate strategy fusion issues, as observed in approaches like PIMC. 
On the other hand, opting for a depth $\depth$ that is too large demands significant sampling efforts to accurately estimate the subgame $\subgame$. 
Moreover, depending on the algorithm employed to solve the subgame $\subgame$, a larger depth can lead to increased computational costs.

\subsection*{Subgame Resolution}

By postponing the application of the perfect information leaf evaluator until a depth of $\depth$, alternative reasoning methods must be employed for steps $1$ through $\depth$. 
In EPIMC, the subgame $\subgame$ of size $\depth$ is built to approximate the real game by encapsulating various elements, including infostates, world states, post-action dynamics, and more. 
In particular, the leaves of the subgame $\subgame$ are average scores obtained from the leaf evaluator. 

After the budget is finished, the subgame $\subgame$ is solved using an algorithm that does not create strategy fusion. 
In other words, one needs an algorithm that works on infostates instead of world states. 
This choice of algorithm, although potentially resource-intensive, carries a reduced computational burden when applied to a subgame $\subgame$ of size $\depth$.
One can think of using information set search~\cite{parker2006overconfidence, parker2006paranoia} which operates on infostates according to a minimax rule or CFR/CFR+~\cite{neller2013introduction, tammelin_solving_2015} that have theoretical guarantees for two players.

\section{Theoretical foundation}
~\label{sec:theoretical}

In the following, we present the theoretical foundation for determinization-based algorithms that suffer from strategy fusion.
We formally (\romannumeral1) define the condition to \emph{create} strategy fusion; (\romannumeral2) define the quantity of strategy fusion; (\romannumeral3) prove that, in the worst case, increasing the depth does not increase the strategy fusion, and in every case, there exists a depth $\depth$ such that the strategy fusion is strictly reduced and in a finite game, there exists a depth $\depth$ such as the that the strategy fusion is removed.

\begin{definition}
~\label{def:create_sf}
For any game $\games$, a policy $\policy{}{} \in \policySet(\games)$ and an infostate $s \in \infostateSet(\games)$ \textbf{create}s strategy fusion if there are $\historyTime{},\historyTime{\prime} \in \historyTimeSetCond{}{\infostate}$ such that $\policy{}{}(\historyTime{}) \neq \policy{}{}(\historyTime{\prime})$.
For any game $\games$, a policy $\policy{}{} \in \policySet(\games)$ \textbf{create}s strategy fusion if there is an infostate $\infostate \in \infostateSet(\games)$ such that $\policy{}{}$ creates strategy fusion in $\infostate$.
\end{definition}

To evaluate the quantity of strategy fusion in $\policy{}{} \in \policySet(\games)$, we propose the following measure $SF(\policy{}{}) = |\{s$ such that $\forall \infostate \in S(\games)$, $\policy{}{}$ \emph{create}s strategy fusion in $s$ $\}|$. 
In other, we count the number of infostate that create strategy fusion. $SF(\policy{}{})= 0$ implies  that there is no strategy fusion. 

In the subsequent discussions, for the sake of simplicity, we presume adherence to the policy provided by EPIMC, denoted as $\policySetTimePovCond{E^\depth}{}{\games}$ when executing EPIMC at a depth of $\depth$. 
Although alternative choices could have been considered, opting for the EPIMC policy is more straightforward, as influenced by the uniform growth of the EPIMC policy across the entire depth $\depth$ space.

\begin{proposition}
~\label{prop:inforequal}
$\forall \games, \forall \depth \in [0,T-1]$ where $T = \{T$ if $\games$ has a finite horizon $T$; else $\infty \}$,$\forall \policy{}{} \in \policySetTimePovCond{E^d}{}{\games}$, then  $\forall \policyTimePov{\prime}{} \in \policySetTimePovCond{E^{\depth+1}}{}{\games}$, $SF(\policy{}{},\games) \geq SF(\policyTimePov{\prime}{},\games)$ 
\end{proposition}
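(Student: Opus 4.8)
The plan is to show that every infostate where an EPIMC policy at depth $\depth+1$ creates strategy fusion is also an infostate where every EPIMC policy at depth $\depth$ creates strategy fusion; this containment of the ``strategy-fusion set'' immediately yields $SF(\policy{}{},\games) \geq SF(\policyTimePov{\prime}{},\games)$. The key structural observation I would exploit is that EPIMC at depth $\depth$ uses the perfect-information leaf evaluator from step $\depth+1$ onward, whereas EPIMC at depth $\depth+1$ uses it only from step $\depth+2$ onward. Hence the two policies can be coupled so that they agree on the subgame tree up to depth $\depth$, and differ only in that, at depth $\depth$, the depth-$\depth$ policy already hands control to $\PerfectAlgo$ (which is where strategy fusion is introduced), while the depth-$(\depth+1)$ policy instead plays one more ``proper'' move on infostates before invoking the leaf evaluator.

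First I would make precise the notion of ``where'' strategy fusion is created: for an infostate $\infostate$ at level $\ell$, whether $\policy{}{}$ creates fusion in $\infostate$ depends on whether $\policy{}{}$ distinguishes two histories $\historyTime{},\historyTime{\prime} \in \historyTimeSetCond{}{\infostate}$. For $\ell \leq \depth$ (infostates handled by the subgame solver $\textalgo{ImperfectAlgo}$ in both runs) no fusion is created by construction, since that solver works on infostates, not world states — this handles the ``easy'' part of both policies identically. For $\ell > \depth+1$, both policies are running $\PerfectAlgo$ on the sampled world state, and I would argue the fusion behavior is identical (the perfect-information solve past the cut is the same object in both). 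The crux is the single level $\ell = \depth+1$: under $\policy{}{}$ (depth $\depth$) this level is already inside the leaf evaluator, which solves each sampled history with its own tailored strategy and therefore creates fusion at that infostate exactly when the per-world optimal moves disagree; under $\policyTimePov{\prime}{}$ (depth $\depth+1$) this same level is the last level of the subgame and is resolved by $\textalgo{ImperfectAlgo}$, hence creates no fusion there. So the set of fusion-infostates for $\policyTimePov{\prime}{}$ is contained in that for $\policy{}{}$ — minus (at least) the level-$(\depth+1)$ infostates — giving the inequality, and in fact often strict.

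The main obstacle I anticipate is handling the interaction between the deferred leaf evaluator and the subgame solver cleanly: when we go from depth $\depth$ to $\depth+1$, the subgame $\subgame$ grows by one level and its leaves move down, so the \emph{values} feeding $\textalgo{ImperfectAlgo}$ change, which in principle could change the solver's chosen actions at levels $\leq \depth$ and thereby move fusion around. I would neutralize this by invoking the paper's stated convention that we ``presume adherence to the policy provided by EPIMC'' together with the remark about ``the uniform growth of the EPIMC policy across the entire depth $\depth$ space'': the intended reading is that the depth-$(\depth+1)$ policy restricted to the first $\depth$ levels coincides with a depth-$\depth$ EPIMC policy, so no fusion can appear at the shallow levels that was not already there. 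Making that coupling statement rigorous — i.e., exhibiting for each $\policyTimePov{\prime}{} \in \policySetTimePovCond{E^{\depth+1}}{}{\games}$ a witnessing $\policy{}{} \in \policySetTimePovCond{E^{\depth}}{}{\games}$ agreeing on the top $\depth$ levels — is the technical heart of the argument; once it is in place, the set inclusion and hence the cardinality inequality follow directly from Definition~\ref{def:create_sf} and the definition of $SF$. Finally I would note that the quantifier structure (``$\forall \policy{}{} \dots \forall \policyTimePov{\prime}{}$'') is consistent with this, since the bound $SF(\policyTimePov{\prime}{},\games) \leq SF(\policy{}{},\games)$ is obtained for the \emph{specific} coupled pair and then, because all EPIMC policies at a given depth share the same fusion set (fusion is created at an infostate whenever the tailored per-world solves can disagree there, independent of tie-breaking), it transfers to every pair.
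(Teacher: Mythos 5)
Your proposal is correct and follows essentially the same route as the paper's own proof: a level-by-level decomposition in which levels $1$ through $\depth$ create no fusion under either policy (the subgame solver operates on infostates), levels beyond $\depth+1$ behave identically under both, and the only change is at level $\depth+1$, where infostates move from inside the perfect-information leaf evaluator (fusion possible) to inside the subgame solver (no fusion), yielding the set containment and hence the inequality. Your version is in fact more careful than the paper's two-case argument, since you explicitly flag and neutralize the coupling issue (that deepening the subgame changes the leaf values feeding $\textalgo{ImperfectAlgo}$), which the paper leaves implicit.
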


\begin{proof}

For any $\infostate \in \infostateSetPovCond{}{\games}$, $\infostate$ does not create strategy fusion in $\policyTimePov{1:\depth}{}$. 
Increasing the depth by $1$, extends the non-inducing region.
Two possibilities arise: (\romannumeral1) if at least one $\infostate$ at $\depth+1$ create strategy fusion, it can no longer do so, i.e., $SF(\policyTimePov{}{},\games) > SF(\policyTimePov{\prime}{},\games)$; (\romannumeral2) if all $\infostate$ at $\depth+1$ do not create strategy fusion, increasing the depth does not reduce SF, i.e., $SF(\policyTimePov{}{},\games) \geq SF(\policyTimePov{\prime}{},\games)$.
We conclude $SF(\policy{}{},\games) \geq SF(\policyTimePov{\prime}{},\games)$.
\end{proof}

\begin{proposition}
~\label{prop:inf}
$\forall \games, \forall \depth \in [0,T]$ where $T = \{T $ if $\games$ has a finite horizon $T$; else $\infty \}$,$\forall \policy{}{} \in \policySetTimePovCond{E^\depth}{}{\games}$, if $SF(\policy{}{},\games)>0$, then $\exists \depth^\prime \in [1,T-\depth], \forall \policyTimePov{\prime}{} \in \policySetTimePovCond{E^{\depth+\depth^\prime}}{}{\games}$ such that $SF(\policy{}{},\games) > SF(\policyTimePov{\prime}{},\games)$.
\end{proposition}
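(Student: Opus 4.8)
The goal is to show that if EPIMC at depth $\depth$ still suffers from strategy fusion ($SF(\policy{}{},\games) > 0$), then there is some strictly larger depth $\depth + \depth'$ at which the strategy fusion count strictly decreases. The plan is to combine the monotonicity already established in Proposition~\ref{prop:inforequal} with an argument that somewhere down the tree there must be an infostate that (a) currently creates strategy fusion and (b) gets ``absorbed'' into the non-inducing region once the depth is pushed far enough to reach it.

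First I would use the hypothesis $SF(\policy{}{},\games) > 0$ to fix a witness: an infostate $\infostate^\star \in \infostateSetPovCond{}{\games}$ on which $\policy{}{}$ creates strategy fusion, i.e.\ there are $\historyTime{},\historyTime{\prime} \in \historyTimeSetCond{}{\infostate^\star}$ with $\policy{}{}(\historyTime{}) \neq \policy{}{}(\historyTime{\prime})$. Let $\ell$ be the (time-)depth of $\infostate^\star$ in the game tree; since $\policy{}{} \in \policySetTimePovCond{E^\depth}{}{\games}$ does not create strategy fusion on the first $\depth$ levels (this is exactly the structural fact invoked in the proof of Proposition~\ref{prop:inforequal}), we must have $\ell > \depth$. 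Now set $\depth' = \ell - \depth$; then $\depth' \geq 1$, and $\depth' \leq T - \depth$ because $\infostate^\star$ lies inside the game, so $\ell \leq T$. This choice of $\depth'$ is the heart of the argument.

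Next I would argue that at depth $\depth + \depth' = \ell$, any policy $\policyTimePov{\prime}{} \in \policySetTimePovCond{E^{\depth+\depth'}}{}{\games}$ no longer creates strategy fusion at $\infostate^\star$: the extended-reasoning region of EPIMC now covers level $\ell$, so by construction EPIMC solves the subgame of size $\ell$ with an algorithm operating on infostates, which forces $\policyTimePov{\prime}{}(\historyTime{}) = \policyTimePov{\prime}{}(\historyTime{\prime})$ for all histories sharing an infostate up to level $\ell$, in particular at $\infostate^\star$. Meanwhile, by the monotonicity chain of Proposition~\ref{prop:inforequal} applied $\depth'$ times, $SF(\policy{}{},\games) \geq SF(\policyTimePov{\prime}{},\games)$; and since $\infostate^\star$ was counted in $SF(\policy{}{},\games)$ but is not counted in $SF(\policyTimePov{\prime}{},\games)$, and no infostate already in the non-inducing region can re-enter (again by the extension-of-region observation), the inequality is strict: $SF(\policy{}{},\games) > SF(\policyTimePov{\prime}{},\games)$.

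The main obstacle I anticipate is pinning down cleanly why the witness infostate $\infostate^\star$, once inside the depth-$\ell$ subgame, is guaranteed \emph{not} to create strategy fusion under \emph{every} $\policyTimePov{\prime}{} \in \policySetTimePovCond{E^{\depth+\depth'}}{}{\games}$ — this hinges on the defining property that the subgame-resolution step of EPIMC uses a strategy-fusion-free (infostate-level) solver, so it should be stated as a lemma or at least spelled out that the EPIMC policy class on depth $\ell$ has no strategy fusion at any level $\le \ell$. A secondary subtlety is making sure the ``non-inducing region only grows'' claim is used consistently with the counting measure $SF$, so that removing $\infostate^\star$ from the tally is not compensated by some other infostate newly creating strategy fusion; this too follows from the same structural monotonicity used in Proposition~\ref{prop:inforequal}, but it is worth making explicit that the set of strategy-fusion-creating infostates for $\policyTimePov{\prime}{}$ is a subset of that for $\policy{}{}$, not merely smaller in cardinality.
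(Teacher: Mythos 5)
Your proposal is correct and follows essentially the same route as the paper's proof: fix a witness infostate that creates strategy fusion, take $\depth^\prime$ to be its distance from the current depth so that the extended (non-inducing) region absorbs it, and combine with the monotonicity of Proposition~\ref{prop:inforequal} to get strictness. Your version is more explicit than the paper's — in particular the observation that the set of strategy-fusion-creating infostates for $\policyTimePov{\prime}{}$ is contained in that for $\policy{}{}$, which is the clean way to rule out compensation — but the underlying argument is the same.
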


\begin{proof}
    Leveraging the rationale from Proposition~\ref{prop:inforequal}, increasing the depth by $\depth^\prime$ extends the non-inducing region. 
    Given the presence of strategy fusion ($SF(\policy,\games)>0$), at least one infostate creates strategy fusion. 
    Extending the reasoning up to this infostate, located $\depth^\prime$ away from the original depth $\depth$, effectively diminishes strategy fusion. 
    Hence, we establish $SF(\policy{}{},\games) > SF(\policyTimePov{\prime}{},\games)$.
\end{proof}

We extend Proposition~\ref{prop:inf} by showing that, in a finite game, there is a depth at which there is no longer strategy fusion.

\begin{proposition}
~\label{prop:finite}
    $\forall \games$ such that $\games$ has a finite horizon $T$, $\forall \depth \in [1,T-1]$,$\forall \policy{}{} \in \policySetTimePov{E^\depth}{}{\games}$ if $SF(\policy{}{},\games)>0$ then $\exists \depth^\prime \in [1,T-\depth], \forall \policyTimePov{\prime}{} \in \policySetTimePov{E^{\depth+\depth^\prime}}{}{\games}$ such that $SF(\policyTimePov{\prime}{},\games)=0$.
\end{proposition}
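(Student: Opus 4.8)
The plan is to iterate Proposition~\ref{prop:inf} and use the finiteness of the horizon $T$ to guarantee termination. The key observation is that $SF(\policy{}{},\games)$ is a non-negative integer for any policy (it counts infostates), so a strictly decreasing sequence of such values must reach $0$ in finitely many steps; and once the postponing depth reaches $T-1$ (equivalently, once the subgame $\subgame$ spans the whole game), the perfect information leaf evaluator is never invoked on a non-terminal history, so no strategy fusion can be created at all.

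Concretely, I would argue as follows. Fix $\games$ with finite horizon $T$, a depth $\depth \in [1,T-1]$, and $\policy{}{} \in \policySetTimePov{E^\depth}{}{\games}$ with $SF(\policy{}{},\games) > 0$. Set $\depth_0 = \depth$ and $\policy_0 = \policy{}{}$. As long as $SF(\policy_k,\games) > 0$ and $\depth_k < T$, Proposition~\ref{prop:inf} supplies a strictly larger depth $\depth_{k+1} = \depth_k + \depth^\prime_k \le T$ and a policy $\policy_{k+1} \in \policySetTimePov{E^{\depth_{k+1}}}{}{\games}$ with $SF(\policy_{k+1},\games) < SF(\policy_k,\games)$. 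Since the $SF$ values form a strictly decreasing sequence of non-negative integers, the process must stop after at most $SF(\policy{}{},\games)$ steps, either because some $SF(\policy_k,\games) = 0$ — in which case we are done with $\depth^\prime = \depth_k - \depth$ — or because we are forced to stop for the other reason, namely $\depth_k$ has reached $T$ (or $T-1$). In the latter situation I would invoke the structural fact that at depth $\depth = T-1$ every query in Algorithm~\ref{alg:PIMC2} descends until either a terminal world state or the full horizon is exhausted, so $\PerfectAlgo$ is only ever applied to terminal histories, where it returns the true payoff independent of which history within an infostate was sampled; hence the EPIMC policy at depth $T-1$ coincides with an exact solve of the subgame $\subgame = \games$ by an algorithm that works on infostates, which by construction does not create strategy fusion, giving $SF(\policyTimePov{\prime}{},\games) = 0$ for every $\policyTimePov{\prime}{} \in \policySetTimePov{E^{T-1}}{}{\games}$. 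Either way, a suitable $\depth^\prime \in [1,T-\depth]$ exists.

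I would also need to check the degenerate edge: if the induction from Proposition~\ref{prop:inf} ever produces a depth exactly equal to $T$ rather than $T-1$, observe that postponing to depth $T$ is, for a horizon-$T$ game, the same as postponing to $T-1$ in terms of where the leaf evaluator fires (it can only fire on terminals at that point), so the conclusion is unchanged; this is why the bound $\depth^\prime \in [1,T-\depth]$ is the right range. I would phrase the final statement so that the monotonicity of Proposition~\ref{prop:inforequal} is used implicitly to assert that once $SF$ hits $0$ at some depth, it stays $0$ at all larger depths, which is what lets us freely take $\depth^\prime = T-\depth$ if we wish.

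The main obstacle is the second half of the argument: Proposition~\ref{prop:inf} by itself only guarantees \emph{strict} decrease, not that the decrease can be pushed all the way to $0$ while staying within the horizon — a priori the depths $\depth^\prime_k$ could be large and overshoot, or the strict decreases could be "wasted" on the wrong infostates. The clean resolution is to not rely on the size of the decrements at all, but instead on the terminal case: once the postponing depth covers the entire game no leaf-evaluator call on a non-terminal history is possible, so strategy fusion is structurally impossible. I expect the bulk of the writing effort to go into making that terminal-coverage claim precise in the FOSG/Algorithm~\ref{alg:PIMC2} formalism (i.e. that $\depth = T-1$ forces $\subgame$ to be the whole game and that $\textalgo{ImperfectAlgo}$ on it yields a fusion-free policy), after which combining it with the integer-valued strict-descent argument is routine.
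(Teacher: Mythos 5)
Your proposal is correct, and its decisive step --- taking $\depth^\prime = T-\depth$ so that the subgame spans the whole game, the leaf evaluator can only fire on terminal histories, and no infostates remain to create fusion --- is exactly the paper's own (one-line) proof. The iterated strict-descent argument via Proposition~\ref{prop:inf} is scaffolding that you yourself correctly identify as insufficient and then discard, so it can simply be omitted.
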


\begin{proof}
    Setting $\depth^\prime=T-\depth$, i.e., until the end of the game, ensures that no further infostate can induce strategy fusion, as there are no infostates remaining.
\end{proof}

\section{Results}
~\label{sec:Results}
\subsection{Games}

Our experiment set involved testing five games: Card Game, Battleship, Dark Chess, Phantom Tic-Tac-Toe, and Dark Hex. 
Each of them is considered a large game, is described below and is implemented in OpenSpiel~\cite{lanctot_openspiel_2019}, a collection of environments and algorithms for research in general reinforcement learning and search/planning in games.
The benchmarks were chosen to show the strengths and weaknesses of the algorithm, especially, by choosing games with public observations (Card game, Battleship) or with private observations (Phantom Tic-Tac-Toe, Dark Chess, and Dark Hex). 

Public observations refer to information that is visible to all players, while private observations are exclusive to a player. 
This distinction significantly influences the dynamics of strategy fusion. 
Private observations amplify the number of potential world states within a single infostate, thereby increasing the likelihood of strategy fusion. 
As our method reduces strategy fusion, we expect EPIMC to be more effective in games where observations are private.

\paragraph{Card game}
The game is played with two players, $22$ cards are taken from a pack of $52$, and known by all, $6$ are hidden and $8$ is given to each player. 
The playing phase is decomposed into tricks, the player starting the trick is the one who won the previous trick. 
The starting player of a trick can play any card in his hand, but the other player must follow the suit of the first player. 
If he can not, he can play any card he wants without possibly winning the trick. 
The winner of the trick is the one with the highest-ranking card. 
At the end of the game, a player wins if he has at least half of the number of tricks won. 

\paragraph{Battleship}

Battleship is a two-player strategy-type guessing game. 
Each player possesses a grid. 
In the beginning, each player secretly places a set of ships S on their grid. 
After placement, turn after turn, each player tries to fire at other players' ships. 
The game ends when all the ships of a player have been destroyed. 
The payoff of each player is computed as the sum of the opponent’s ships that were destroyed, minus the sum of ships that the player lost. 
The grid is fixed to $3\times3$, with $2$ ships, one of size $1\times 1$, and the second of size $2 \times 1$. 

\paragraph{Dark Chess}
Dark chess is a chess variant with incomplete information. 
In chess, there are two players, white and black, each controlling a set of chess pieces of their respective colors. 
The goal of the game is to checkmate the opponent's king. 
In Dark chess, the incomplete information comes from the fact that each player sees his own pieces, but only sees his opponent's pieces if they are reachable by one of his pieces. 
Furthermore, in his variant, the purpose is to capture the king (not to checkmate it), however, a player must be wary as he is not told if their king is in check. 
The size of the board is fixed to $4 \times 4$. 

\paragraph{Phantom Tic-Tac-Toe}
Phantom Tic-Tac-Toe is a variant of the game of Tic-Tac-Toe with imperfect information. 
In Tic-Tac-Toe, the goal is to claim three cells along the same row, column, or diagonal. With imperfect information, the players do not observe the other player's pieces, only a referee knows the world state of the board. 
When it is a player's turn, the player selects a move and indicates it to the referee. 
The referee informs the player's whether the action is `legal' or `illegal'. 
If the move is `illegal', the player must choose a new move until they find a legal one. 

\paragraph{Dark Hex}
Dark Hex is an imperfect information version of the classic game of Hex. 
The objective of the game is to create a connection between opposite sides of a rhombus-shaped board. 
In Dark Hex, players are not exposed to opposite sides' pieces of information. 
Only a referee has the full information of the board and when a move fails due to collision/rejection the player gets some information of the cell and is allowed to make another move until success. 
The size of the board is fixed to $4 \times 4$.

\subsection{Experimental Information}

For all the experiments, the budget ranged from $0.1$ seconds to $100$ seconds for Card Game, Battleship and Phantom Tic-Tac-Toe, and from $0.1$ seconds to $1000$ seconds for Dark Hex and Dark Chess. 
Each experiment was conducted over $500$ games, in which the games were evenly split between playing in the first and second positions. 
The opponent is PIMC with a fixed one-second budget.

All experiments were executed on a single CPU Intel(R) Xeon(R) Gold 5218. 
EPIMC and PIMC use random rollout as the perfect information leaf evaluator, depth at $3$ and Information Set Search for the subgame resolution.
In practice, PIMC is often tested with minimax as the perfect information leaf evaluator, yet using it may be slow in large benchmarks or require using a handmade heuristic.
As a reminder, EPIMC at depth $1$ is PIMC. 
To reduce EPIMC/PIMC costs, multiple CPUs could be utilized, but for fairness across algorithms, this approach was not employed.

\paragraph*{\textbf{Other online algorithms}}
The other algorithms compared are Information Set MCTS (IS-MCTS) \cite{cowling_information_2012}, Online Outcome Sampling (OOS) \cite{lisy_online_2015}, Recursive PIMC (IIMC) \cite{furtak_recursive_2013}, and a random agent (Random). 
IS-MCTS is a determinization-based algorithm that employs Monte Carlo Tree Search on infostates. 
OOS is a regret-based algorithm that converges to the Nash equilibrium with increasing search time. 
IIMC is a determinization-based algorithm rooted in PIMC, estimating action values by recursively calling PIMC until game completion.

For IS-MCTS, the exploration constant was chosen from ${0.6,1,1.5,2}$ and set at $1$. 
In OOS, the target was selected between Information Set and Public Subgame Targeting, and it was set at Information Set. 
Regarding IIMC, the number of samplings at level 2 was chosen from ${2,5,10}$ and set at $5$.

\subsection{Experimental Results}

In our experiments, we aimed (\romannumeral1) to analyze the hyperparameters of EPIMC (depth, subgame resolution, perfect leaf evaluator) and (\romannumeral2) to compare its performance against other online algorithms. 

\paragraph{Postponing leaf evaluator}

\begin{figure*}[!htbp]
\centering
    \subfloat[\centering Card Game]{\scalebox{0.6}{% This file was created with tikzplotlib v0.10.1.
\begin{tikzpicture}

\definecolor{darkgray176}{RGB}{176,176,176}
\definecolor{darkorange25512714}{RGB}{255,127,14}
\definecolor{forestgreen4416044}{RGB}{44,160,44}
\definecolor{lightgray204}{RGB}{204,204,204}
\definecolor{steelblue31119180}{RGB}{31,119,180}

\begin{axis}[
legend cell align={left},
legend style={fill opacity=0.8, draw opacity=1, text opacity=1, draw=lightgray204},
log basis x={10},
tick align=outside,
tick pos=left,
x grid style={darkgray176},
xlabel={Budget},
xmin=0.0707945784384138, xmax=141.253754462275,
xmode=log,
xtick style={color=black},
xtick={0.001,0.01,0.1,1,10,100,1000,10000},
xticklabels={
  \(\displaystyle {10^{-3}}\),
  \(\displaystyle {10^{-2}}\),
  \(\displaystyle {10^{-1}}\),
  \(\displaystyle {10^{0}}\),
  \(\displaystyle {10^{1}}\),
  \(\displaystyle {10^{2}}\),
  \(\displaystyle {10^{3}}\),
  \(\displaystyle {10^{4}}\)
},
y grid style={darkgray176},
ylabel={Winning rate},
ymin=0, ymax=100,
ytick style={color=black}
]
\path [draw=steelblue31119180, semithick]
(axis cs:0.1,46.5010670636492)
--(axis cs:0.1,52.6989329363508);

\path [draw=steelblue31119180, semithick]
(axis cs:0.3,47.0009740911054)
--(axis cs:0.3,53.1990259088946);

\path [draw=steelblue31119180, semithick]
(axis cs:1,46.700992685391)
--(axis cs:1,52.899007314609);

\path [draw=steelblue31119180, semithick]
(axis cs:3,46.4011228485143)
--(axis cs:3,52.5988771514857);

\path [draw=steelblue31119180, semithick]
(axis cs:10,47.0009740911054)
--(axis cs:10,53.1990259088946);

\path [draw=steelblue31119180, semithick]
(axis cs:30,46.30119103138)
--(axis cs:30,52.49880896862);

\path [draw=steelblue31119180, semithick]
(axis cs:100,47.0009740911054)
--(axis cs:100,53.1990259088946);

\path [draw=darkorange25512714, semithick]
(axis cs:0.1,45.4023627714014)
--(axis cs:0.1,51.5976372285986);

\path [draw=darkorange25512714, semithick]
(axis cs:0.3,47.50119103138)
--(axis cs:0.3,53.69880896862);

\path [draw=darkorange25512714, semithick]
(axis cs:1,46.5010670636492)
--(axis cs:1,52.6989329363508);

\path [draw=darkorange25512714, semithick]
(axis cs:3,44.8037024458234)
--(axis cs:3,50.9962975541766);

\path [draw=darkorange25512714, semithick]
(axis cs:10,43.2094646910284)
--(axis cs:10,49.3905353089716);

\path [draw=darkorange25512714, semithick]
(axis cs:30,43.0104095313456)
--(axis cs:30,49.1895904686544);

\path [draw=darkorange25512714, semithick]
(axis cs:100,43.2094646910284)
--(axis cs:100,49.3905353089716);

\path [draw=forestgreen4416044, semithick]
(axis cs:0.1,45.8017179495727)
--(axis cs:0.1,51.9982820504273);

\path [draw=forestgreen4416044, semithick]
(axis cs:0.3,46.1013645945352)
--(axis cs:0.3,52.2986354054648);

\path [draw=forestgreen4416044, semithick]
(axis cs:1,47.3010670636492)
--(axis cs:1,53.4989329363508);

\path [draw=forestgreen4416044, semithick]
(axis cs:3,48.3021829518191)
--(axis cs:3,54.4978170481809);

\path [draw=forestgreen4416044, semithick]
(axis cs:10,47.50119103138)
--(axis cs:10,53.69880896862);

\path [draw=forestgreen4416044, semithick]
(axis cs:30,47.7013645945352)
--(axis cs:30,53.8986354054648);

\path [draw=forestgreen4416044, semithick]
(axis cs:100,48.6027596515608)
--(axis cs:100,54.7972403484392);

\addplot [semithick, steelblue31119180, dotted, mark=*, mark size=3, mark options={solid}]
table {%
0.1 49.6
0.3 50.1
1 49.8
3 49.5
10 50.1
30 49.4
100 50.1
};
\addlegendentry{EPIMC\_D1}
\addplot [semithick, darkorange25512714, dotted, mark=pentagon*, mark size=3, mark options={solid}]
table {%
0.1 48.5
0.3 50.6
1 49.6
3 47.9
10 46.3
30 46.1
100 46.3
};
\addlegendentry{EPIMC\_D2}
\addplot [semithick, forestgreen4416044, dotted, mark=square*, mark size=3, mark options={solid}]
table {%
0.1 48.9
0.3 49.2
1 50.4
3 51.4
10 50.6
30 50.8
100 51.7
};
\addlegendentry{EPIMC\_D3}
\end{axis}

\end{tikzpicture}}}
    \subfloat[\centering Battleship]{\scalebox{0.6}{% This file was created with tikzplotlib v0.10.1.
\begin{tikzpicture}

\definecolor{darkgray176}{RGB}{176,176,176}
\definecolor{darkorange25512714}{RGB}{255,127,14}
\definecolor{forestgreen4416044}{RGB}{44,160,44}
\definecolor{lightgray204}{RGB}{204,204,204}
\definecolor{steelblue31119180}{RGB}{31,119,180}

\begin{axis}[
legend cell align={left},
legend style={fill opacity=0.8, draw opacity=1, text opacity=1, draw=lightgray204},
log basis x={10},
tick align=outside,
tick pos=left,
x grid style={darkgray176},
xlabel={Budget},
xmin=0.0707945784384138, xmax=141.253754462275,
xmode=log,
xtick style={color=black},
xtick={0.001,0.01,0.1,1,10,100,1000,10000},
xticklabels={
  \(\displaystyle {10^{-3}}\),
  \(\displaystyle {10^{-2}}\),
  \(\displaystyle {10^{-1}}\),
  \(\displaystyle {10^{0}}\),
  \(\displaystyle {10^{1}}\),
  \(\displaystyle {10^{2}}\),
  \(\displaystyle {10^{3}}\),
  \(\displaystyle {10^{4}}\)
},
y grid style={darkgray176},
ymin=0, ymax=100,
ytick style={color=black},
ytick=\empty
]
\path [draw=steelblue31119180, semithick]
(axis cs:0.1,46.601023676115)
--(axis cs:0.1,52.798976323885);

\path [draw=steelblue31119180, semithick]
(axis cs:0.3,43.8069299678151)
--(axis cs:0.3,49.9930700321849);

\path [draw=steelblue31119180, semithick]
(axis cs:1,46.2012716130645)
--(axis cs:1,52.3987283869355);

\path [draw=steelblue31119180, semithick]
(axis cs:3,46.4011228485143)
--(axis cs:3,52.5988771514857);

\path [draw=steelblue31119180, semithick]
(axis cs:10,44.006184856201)
--(axis cs:10,50.193815143799);

\path [draw=steelblue31119180, semithick]
(axis cs:30,49.7058309910414)
--(axis cs:30,55.8941690089586);

\path [draw=steelblue31119180, semithick]
(axis cs:100,45.6020155429699)
--(axis cs:100,51.7979844570301);

\path [draw=darkorange25512714, semithick]
(axis cs:0.1,44.5045400367635)
--(axis cs:0.1,50.6954599632365);

\path [draw=darkorange25512714, semithick]
(axis cs:0.3,46.900967893035)
--(axis cs:0.3,53.099032106965);

\path [draw=darkorange25512714, semithick]
(axis cs:1,45.2027596515608)
--(axis cs:1,51.3972403484392);

\path [draw=darkorange25512714, semithick]
(axis cs:3,43.7073212232759)
--(axis cs:3,49.8926787767242);

\path [draw=darkorange25512714, semithick]
(axis cs:10,47.201023676115)
--(axis cs:10,53.398976323885);

\path [draw=darkorange25512714, semithick]
(axis cs:30,44.7039692094554)
--(axis cs:30,50.8960307905446);

\path [draw=darkorange25512714, semithick]
(axis cs:100,43.9065511803167)
--(axis cs:100,50.0934488196833);

\path [draw=forestgreen4416044, semithick]
(axis cs:0.1,43.5081411571677)
--(axis cs:0.1,49.6918588428323);

\path [draw=forestgreen4416044, semithick]
(axis cs:0.3,42.7119206331443)
--(axis cs:0.3,48.8880793668557);

\path [draw=forestgreen4416044, semithick]
(axis cs:1,44.903448111205)
--(axis cs:1,51.096551888795);

\path [draw=forestgreen4416044, semithick]
(axis cs:3,44.006184856201)
--(axis cs:3,50.193815143799);

\path [draw=forestgreen4416044, semithick]
(axis cs:10,45.0032062025379)
--(axis cs:10,51.1967937974621);

\path [draw=forestgreen4416044, semithick]
(axis cs:30,44.7039692094554)
--(axis cs:30,50.8960307905446);

\path [draw=forestgreen4416044, semithick]
(axis cs:100,44.2054895805637)
--(axis cs:100,50.3945104194363);

\addplot [semithick, steelblue31119180, dotted, mark=*, mark size=3, mark options={solid}]
table {%
0.1 49.7
0.3 46.9
1 49.3
3 49.5
10 47.1
30 52.8
100 48.7
};
%\addlegendentry{DL_D3}
\addplot [semithick, darkorange25512714, dotted, mark=pentagon*, mark size=3, mark options={solid}]
table {%
0.1 47.6
0.3 50
1 48.3
3 46.8
10 50.3
30 47.8
100 47
};
%\addlegendentry{DL_D3}
\addplot [semithick, forestgreen4416044, dotted, mark=square*, mark size=3, mark options={solid}]
table {%
0.1 46.6
0.3 45.8
1 48
3 47.1
10 48.1
30 47.8
100 47.3
};
%\addlegendentry{DL_D3}
\end{axis}

\end{tikzpicture}}}
    
    \subfloat[\centering Dark Chess]{\scalebox{0.6}{% This file was created with tikzplotlib v0.10.1.
\begin{tikzpicture}

\definecolor{crimson2143940}{RGB}{214,39,40}
\definecolor{darkgray176}{RGB}{176,176,176}
\definecolor{darkorange25512714}{RGB}{255,127,14}
\definecolor{forestgreen4416044}{RGB}{44,160,44}
\definecolor{lightgray204}{RGB}{204,204,204}
\definecolor{mediumpurple148103189}{RGB}{148,103,189}
\definecolor{sienna1408675}{RGB}{140,86,75}
\definecolor{steelblue31119180}{RGB}{31,119,180}

\begin{axis}[
legend cell align={left},
legend style={
  fill opacity=0.8,
  draw opacity=1,
  text opacity=1,
  at={(0.03,0.97)},
  anchor=north west,
  draw=lightgray204
},
log basis x={10},
tick align=outside,
tick pos=left,
x grid style={darkgray176},
xlabel={Budget},
xmin=0.0630957344480193, xmax=1584.89319246111,
xmode=log,
xtick style={color=black},
xtick={0.001,0.01,0.1,1,10,100,1000,10000,100000},
xticklabels={
  \(\displaystyle {10^{-3}}\),
  \(\displaystyle {10^{-2}}\),
  \(\displaystyle {10^{-1}}\),
  \(\displaystyle {10^{0}}\),
  \(\displaystyle {10^{1}}\),
  \(\displaystyle {10^{2}}\),
  \(\displaystyle {10^{3}}\),
  \(\displaystyle {10^{4}}\)
  \(\displaystyle {10^{5}}\)
},
y grid style={darkgray176},
ylabel={Winning rate},
ytick=\empty,
ymin=0, ymax=100,
ytick style={color=black}
]
\path [draw=steelblue31119180, semithick]
(axis cs:0.1,43.9198406933081)
--(axis cs:0.1,52.6801593066919);

\path [draw=steelblue31119180, semithick]
(axis cs:1,45.7173155294957)
--(axis cs:1,54.4826844705044);

\path [draw=steelblue31119180, semithick]
(axis cs:10,40.837548960504)
--(axis cs:10,49.562451039496);

\path [draw=steelblue31119180, semithick]
(axis cs:100,41.0358937354826)
--(axis cs:100,49.7641062645174);

\path [draw=steelblue31119180, semithick]
(axis cs:1000,41.24)
--(axis cs:1000,49.96);

\path [draw=darkorange25512714, semithick]
(axis cs:0.1,46.7183675042286)
--(axis cs:0.1,55.4816324957714);

\path [draw=darkorange25512714, semithick]
(axis cs:1,57.8475768451388)
--(axis cs:1,66.3524231548612);

\path [draw=darkorange25512714, semithick]
(axis cs:10,61.0247307305995)
--(axis cs:10,69.3752692694005);

\path [draw=darkorange25512714, semithick]
(axis cs:100,64.5318406953513)
--(axis cs:100,72.6681593046487);

\path [draw=darkorange25512714, semithick]
(axis cs:1000,65.8)
--(axis cs:1000,73.8);

\path [draw=forestgreen4416044, semithick]
(axis cs:0.1,45.9173856532887)
--(axis cs:0.1,54.6826143467113);

\path [draw=forestgreen4416044, semithick]
(axis cs:1,65.568074667358)
--(axis cs:1,73.631925332642);

\path [draw=forestgreen4416044, semithick]
(axis cs:10,77.7752568283154)
--(axis cs:10,84.6247431716846);

\path [draw=forestgreen4416044, semithick]
(axis cs:100,87.9298577471276)
--(axis cs:100,93.0701422528724);

\path [draw=forestgreen4416044, semithick]
(axis cs:1000,99.37)
--(axis cs:1000,100);

\addplot [semithick, steelblue31119180, dotted, mark=*, mark size=3, mark options={solid}]
table {%
0.1 48.3
1 50.1
10 45.2
100 45.4
1000 45.6
};
% \addlegendentry{PIMC}
\addplot [semithick, darkorange25512714, dotted, mark=pentagon*, mark size=3, mark options={solid}]
table {%
0.1 51.1
1 62.1
10 65.2
100 68.6
1000 69.8
};
% \addlegendentry{EPIMC\_D2}
\addplot [semithick, forestgreen4416044, dotted, mark=square*, mark size=3, mark options={solid}]
table {%
0.1 50.3
1 69.6
10 81.2
100 90.5
1000 99.8
};
% \addlegendentry{EPIMC\_D3}
% \addplot [semithick, crimson2143940, dotted, mark=square*, mark size=3, mark options={solid}]
% table {%
% 0.1 2.1
% 1 3.7
% 10 14.05
% 100 23.7
% 1000 40
% };
% % \addlegendentry{OOS}
% \addplot [semithick, mediumpurple148103189, dotted, mark=triangle*, mark size=3, mark options={solid}]
% table {%
% 0.1 11.8
% 1 15.2
% 10 12.6
% 100 14.2
% 1000 14.5
% };
% % \addlegendentry{IIMC}
% \addplot [semithick, sienna1408675, dotted, mark=diamond*, mark size=3, mark options={solid}]
% table {%
% 0.1 27
% 1 57.3
% 10 74.9
% 100 92.2
% 1000 77.25
% };
% % \addlegendentry{IS-MCTS}
% \addplot [semithick, black, dotted, mark=halfcircle, mark size=3, mark options={solid}]
% table {%
% 0.1 3.4
% 1 3.4
% 10 3.4
% 100 3.4
% 1000 3.4
% };
%\addlegendentry{Random}
\end{axis}

\end{tikzpicture}}}
    \subfloat[\centering Phantom Tic-Tac-Toe]{\scalebox{0.6}{% This file was created with tikzplotlib v0.10.1.
\begin{tikzpicture}

\definecolor{crimson2143940}{RGB}{214,39,40}
\definecolor{darkgray176}{RGB}{176,176,176}
\definecolor{darkorange25512714}{RGB}{255,127,14}
\definecolor{forestgreen4416044}{RGB}{44,160,44}
\definecolor{lightgray204}{RGB}{204,204,204}
\definecolor{mediumpurple148103189}{RGB}{148,103,189}
\definecolor{sienna1408675}{RGB}{140,86,75}
\definecolor{steelblue31119180}{RGB}{31,119,180}

\begin{axis}[
legend cell align={left},
legend style={
  fill opacity=0.8,
  draw opacity=1,
  text opacity=1,
  at={(0.5,0.09)},
  anchor=south,
  draw=lightgray204
},
log basis x={10},
tick align=outside,
tick pos=left,
x grid style={darkgray176},
xlabel={Budget},
xmin=0.0707945784384138, xmax=141.253754462275,
xmode=log,
xtick style={color=black},
xtick={0.001,0.01,0.1,1,10,100,1000,10000},
xticklabels={
  \(\displaystyle {10^{-3}}\),
  \(\displaystyle {10^{-2}}\),
  \(\displaystyle {10^{-1}}\),
  \(\displaystyle {10^{0}}\),
  \(\displaystyle {10^{1}}\),
  \(\displaystyle {10^{2}}\),
  \(\displaystyle {10^{3}}\),
  \(\displaystyle {10^{4}}\)
  \(\displaystyle {10^{5}}\)
},
y grid style={darkgray176},
ytick=\empty,
ymin=0, ymax=100,
ytick style={color=black}
]
\path [draw=steelblue31119180, semithick]
(axis cs:0.1,42.028681465736)
--(axis cs:0.1,50.771318534264);

\path [draw=steelblue31119180, semithick]
(axis cs:1,42.028681465736)
--(axis cs:1,50.771318534264);

\path [draw=steelblue31119180, semithick]
(axis cs:10,45.5173155294956)
--(axis cs:10,54.2826844705043);

\path [draw=steelblue31119180, semithick]
(axis cs:100,47.6208143222741)
--(axis cs:100,56.3791856777259);

% \path [draw=darkorange25512714, semithick]
% (axis cs:0.1,63.7044347261947)
% --(axis cs:0.1,71.8955652738053);

% \path [draw=darkorange25512714, semithick]
% (axis cs:1,65.0496800363428)
% --(axis cs:1,73.1503199636572);

% \path [draw=darkorange25512714, semithick]
% (axis cs:10,60.8191771144905)
% --(axis cs:10,69.1808228855095);

% \path [draw=darkorange25512714, semithick]
% (axis cs:100,63.601106213623)
% --(axis cs:100,71.798893786377);

\path [draw=forestgreen4416044, semithick]
(axis cs:0.1,59.9977673134392)
--(axis cs:0.1,68.4022326865608);

\path [draw=forestgreen4416044, semithick]
(axis cs:1,76.3872884593238)
--(axis cs:1,83.4127115406762);

\path [draw=forestgreen4416044, semithick]
(axis cs:10,74.6868847303193)
--(axis cs:10,81.9131152696807);

\path [draw=forestgreen4416044, semithick]
(axis cs:100,75.8550106843603)
--(axis cs:100,82.9449893156398);

% \path [draw=crimson2143940, semithick]
% (axis cs:0.1,10.5896889642581)
% --(axis cs:0.1,14.7103110357419);

% \path [draw=crimson2143940, semithick]
% (axis cs:1,16.4734044292466)
% --(axis cs:1,21.3265955707534);

% \path [draw=crimson2143940, semithick]
% (axis cs:10,14.1993991219683)
% --(axis cs:10,18.8006008780317);

% \path [draw=crimson2143940, semithick]
% (axis cs:100,16.9967113671645)
% --(axis cs:100,21.9032886328355);

% \path [draw=mediumpurple148103189, semithick]
% (axis cs:0.1,19.311240425292)
% --(axis cs:0.1,26.688759574708);

% \path [draw=mediumpurple148103189, semithick]
% (axis cs:1,20.1618015932805)
% --(axis cs:1,27.6381984067195);

% \path [draw=mediumpurple148103189, semithick]
% (axis cs:10,23.2994916597961)
% --(axis cs:10,31.1005083402039);

% \path [draw=mediumpurple148103189, semithick]
% (axis cs:100,21.2994290997273)
% --(axis cs:100,28.9005709002727);

% \path [draw=sienna1408675, semithick]
% (axis cs:0.1,48.9268626877264)
% --(axis cs:0.1,57.6731373122737);

% \path [draw=sienna1408675, semithick]
% (axis cs:1,59.5875409328992)
% --(axis cs:1,68.0124590671008);

% \path [draw=sienna1408675, semithick]
% (axis cs:10,62.4658958155363)
% --(axis cs:10,70.7341041844637);

% \path [draw=sienna1408675, semithick]
% (axis cs:100,52.3568328201645)
% --(axis cs:100,61.0431671798355);

\addplot [semithick, steelblue31119180, dotted, mark=*, mark size=3, mark options={solid}]
table {%
0.1 46.4
1 46.4
10 49.9
100 52
};
% \addlegendentry{PIMC}
\addplot [semithick, darkorange25512714, dotted, mark=pentagon*, mark size=3, mark options={solid}]
table {%
0.1 67.8
1 69.1
10 65
100 67.7
};
% \addlegendentry{EPIMC\_D2}
\addplot [semithick, forestgreen4416044, dotted, mark=square*, mark size=3, mark options={solid}]
table {%
0.1 64.2
1 79.9
10 78.3
100 79.4
};
% \addlegendentry{EPIMC\_D3}
% \addplot [semithick, crimson2143940, dotted, mark=square*, mark size=3, mark options={solid}]
% table {%
% 0.1 12.65
% 1 18.9
% 10 16.5
% 100 19.45
% };
% % \addlegendentry{OOS}
% \addplot [semithick, mediumpurple148103189, dotted, mark=triangle*, mark size=3, mark options={solid}]
% table {%
% 0.1 23
% 1 23.9
% 10 27.2
% 100 25.1
% };
% % \addlegendentry{IIMC}
% \addplot [semithick, sienna1408675, dotted, mark=diamond*, mark size=3, mark options={solid}]
% table {%
% 0.1 53.3
% 1 63.8
% 10 66.6
% 100 56.7
% };
% % \addlegendentry{IS MCTS}
% \addplot [semithick, black, dotted, mark=halfcircle, mark size=3, mark options={solid}]
% table {%
% 0.1 12.1
% 1 12.1
% 10 12.1
% 100 12.1
% };
\end{axis}

\end{tikzpicture}}}
    \subfloat[\centering Dark Hex \hspace{20mm}~]{\scalebox{0.6}{% This file was created with tikzplotlib v0.10.1.
\begin{tikzpicture}

\definecolor{crimson2143940}{RGB}{214,39,40}
\definecolor{darkgray176}{RGB}{176,176,176}
\definecolor{darkorange25512714}{RGB}{255,127,14}
\definecolor{forestgreen4416044}{RGB}{44,160,44}
\definecolor{lightgray204}{RGB}{204,204,204}
\definecolor{mediumpurple148103189}{RGB}{148,103,189}
\definecolor{sienna1408675}{RGB}{140,86,75}
\definecolor{steelblue31119180}{RGB}{31,119,180}

\begin{axis}[
legend cell align={left},
legend style={
  fill opacity=0.8,
  draw opacity=1,
  text opacity=1,
  at={(0.03,0.97)},
  anchor=north west,
  draw=lightgray204
},
legend pos=outer north east,
log basis x={10},
tick align=outside,
tick pos=left,
x grid style={darkgray176},
xlabel={Budget},
xmin=0.0630957344480193, xmax=1584.89319246111,
xmode=log,
xtick style={color=black},
xtick={0.001,0.01,0.1,1,10,100,1000,10000,100000},
xticklabels={
  \(\displaystyle {10^{-3}}\),
  \(\displaystyle {10^{-2}}\),
  \(\displaystyle {10^{-1}}\),
  \(\displaystyle {10^{0}}\),
  \(\displaystyle {10^{1}}\),
  \(\displaystyle {10^{2}}\),
  \(\displaystyle {10^{3}}\),
  \(\displaystyle {10^{4}}\)
  \(\displaystyle {10^{5}}\)
},
y grid style={darkgray176},
ymin=0, ymax=100,
ytick=\empty,
ytick style={color=black}
]
\path [draw=steelblue31119180, semithick]
(axis cs:0.1,50.437548960504)
--(axis cs:0.1,59.162451039496);

\path [draw=steelblue31119180, semithick]
(axis cs:1,43.4215512664872)
--(axis cs:1,52.1784487335128);

\path [draw=steelblue31119180, semithick]
(axis cs:10,43.0232361763513)
--(axis cs:10,51.7767638236487);

\path [draw=steelblue31119180, semithick]
(axis cs:100,40.0448817641768)
--(axis cs:100,48.7551182358232);

\path [draw=steelblue31119180, semithick]
(axis cs:1000,43.4)
--(axis cs:1000,53);

\path [draw=darkorange25512714, semithick]
(axis cs:0.1,49.8327962777081)
--(axis cs:0.1,58.5672037222919);

\path [draw=darkorange25512714, semithick]
(axis cs:1,57.3368851049966)
--(axis cs:1,65.8631148950034);

\path [draw=darkorange25512714, semithick]
(axis cs:10,65.568074667358)
--(axis cs:10,73.631925332642);

\path [draw=darkorange25512714, semithick]
(axis cs:100,57.9497726517279)
--(axis cs:100,66.4502273482721);

\path [draw=darkorange25512714, semithick]
(axis cs:1000,42.23)
--(axis cs:1000,50.97);

\path [draw=forestgreen4416044, semithick]
(axis cs:0.1,42.4262917298933)
--(axis cs:0.1,51.1737082701067);

\path [draw=forestgreen4416044, semithick]
(axis cs:1,56.08081464656)
--(axis cs:1,63.3477567820115);

\path [draw=forestgreen4416044, semithick]
(axis cs:10,61.4360815992626)
--(axis cs:10,69.7639184007375);

\path [draw=forestgreen4416044, semithick]
(axis cs:100,80.7865562397951)
--(axis cs:100,87.2134437602049);

\path [draw=forestgreen4416044, semithick]
(axis cs:1000,88.8)
--(axis cs:1000,93.8);

\addplot [semithick, steelblue31119180, dotted, mark=*, mark size=3, mark options={solid}]
table {%
0.1 54.8
1 47.8
10 47.4
100 44.4
1000 48.2
};
% \addlegendentry{PIMC}
\addplot [semithick, darkorange25512714, dotted, mark=pentagon*, mark size=3, mark options={solid}]
table {%
0.1 54.2
1 61.6
10 69.6
100 62.2
1000 46.6
};
% \addlegendentry{EPIMC\_D2}
\addplot [semithick, forestgreen4416044, dotted, mark=square*, mark size=3, mark options={solid}]
table {%
0.1 46.8
1 59.7142857142857
10 65.6
100 84
1000 91.3
};
% \addlegendentry{EPIMC\_D3}
% \addplot [semithick, crimson2143940, dotted, mark=square*, mark size=3, mark options={solid}]
% table {%
% 0.1 0.4
% 1 0.4
% 10 0.6
% 100 0.777777777777778
% 1000 1
% };
% \addlegendentry{OOS}
% \addplot [semithick, mediumpurple148103189, dotted, mark=triangle*, mark size=3, mark options={solid}]
% table {%
% 0.1 0.2
% 1 1.8
% 10 1.8
% 100 3
% 1000 2.38095238095238
% };
% \addlegendentry{IIMC}
% \addplot [semithick, sienna1408675, dotted, mark=diamond*, mark size=3, mark options={solid}]
% table {%
% 0.1 25.4
% 1 58.2
% 10 81.8
% 100 68.6666666666667
% 1000 65.5
% };
% \addlegendentry{IS MCTS}
% \addplot [semithick, black, dotted, mark=halfcircle, mark size=3, mark options={solid}]
% table {%
% 0.1 0.5
% 1 0.5
% 10 0.5
% 100 0.5
% 1000 0.5
% };
% \addlegendentry{Random}
\end{axis}

\end{tikzpicture}}}

    \caption{\centering Winning rate of EPIMC when the depth from 1 to 3. The opponent is PIMC with one second of the budget.}

~\label{fig:E_hyper}
\end{figure*}
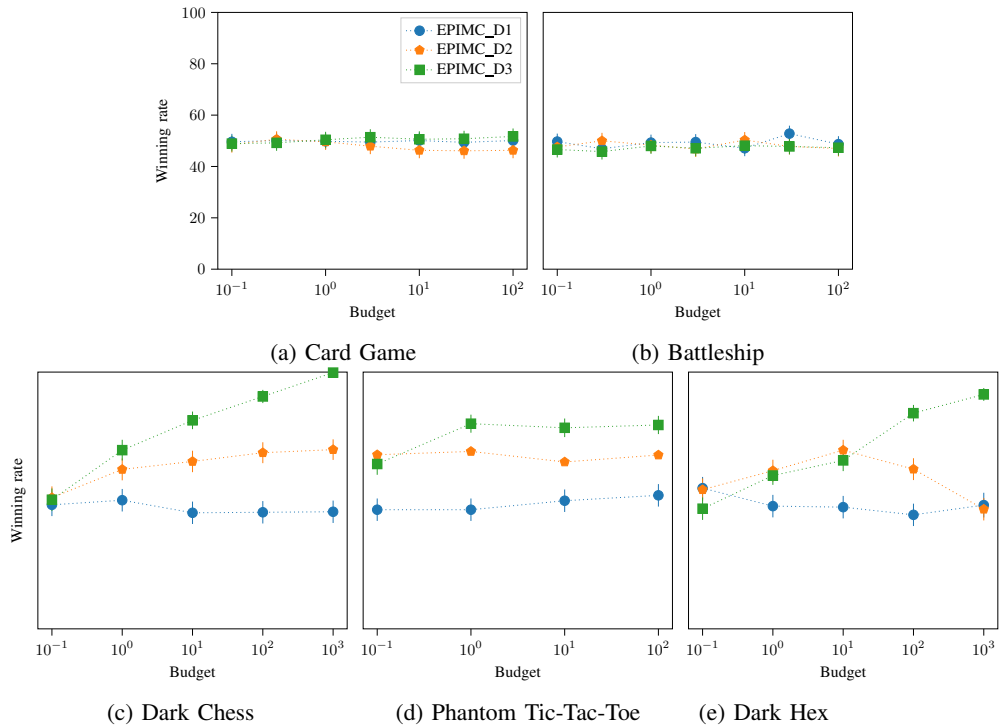
In Figure~\ref{fig:E_hyper}, we examine the performance of EPIMC according to various depths varying from $1$ to $3$. 
As expected, for Card Game and Battleship, increasing the depth does not lead to any improvement as both games have a majority of their observation public.
Conversely, for games where the observations are private, we observe an important increase in performance
At $100$ seconds for Dark Chess, we achieve $80\%$/$65\%$/$45\%$ winning rates at depths $3$/$2$/$1$. 
More than that, at $1000$ seconds, EPIMC at depth $3$ wins close to $100\%$ for both Dark Hex and Dark Chess. 
Interestingly, in Dark Hex, at $1000$ seconds, similar performances are observed at depth $2$ and $1$. 
Indeed reducing the strategy fusion does not necessarily mean that the strategy produced at the end will be changed. 

In the following, the experiments are reduced to Dark Chess, Phantom Tic-Tac-Toe, and Dark Hex, as our methods work when games have private information, yet, the extended experimentation for Battleship and Card Game are available in the Appendix. Furthermore, we replicate the experimentation of this section against IS-MCTS instead of PIMC

\paragraph{Subgame resolution}
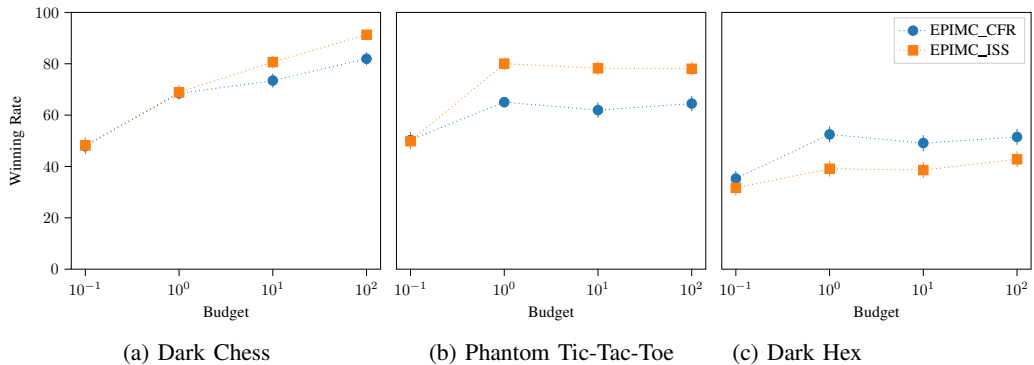
\begin{figure*}[!htbp]
\centering

    \subfloat[\centering Dark Chess]{\scalebox{0.6}{% This file was created with tikzplotlib v0.10.1.
\begin{tikzpicture}

\definecolor{darkgray176}{RGB}{176,176,176}
\definecolor{darkorange25512714}{RGB}{255,127,14}
\definecolor{lightgray204}{RGB}{204,204,204}
\definecolor{steelblue31119180}{RGB}{31,119,180}

\begin{axis}[
legend cell align={left},
legend style={
  fill opacity=0.8,
  draw opacity=1,
  text opacity=1,
  at={(0.03,0.97)},
  anchor=north west,
  draw=lightgray204
},
log basis x={10},
tick align=outside,
tick pos=left,
x grid style={darkgray176},
xlabel={Budget},
xmin=0.0707945784384138, xmax=141.253754462275,
xmode=log,
xtick style={color=black},
xtick={0.001,0.01,0.1,1,10,100,1000,10000},
xticklabels={
  \(\displaystyle {10^{-3}}\),
  \(\displaystyle {10^{-2}}\),
  \(\displaystyle {10^{-1}}\),
  \(\displaystyle {10^{0}}\),
  \(\displaystyle {10^{1}}\),
  \(\displaystyle {10^{2}}\),
  \(\displaystyle {10^{3}}\),
  \(\displaystyle {10^{4}}\)
},
y grid style={darkgray176},
ylabel={Winning Rate},
ymin=0, ymax=100,
ytick style={color=black}
]
\path [draw=steelblue31119180, semithick]
(axis cs:0.1,44.8037024458234)
--(axis cs:0.1,50.9962975541766);

\path [draw=steelblue31119180, semithick]
(axis cs:1,66.4387333820199)
--(axis cs:1,70.5112666179801);

\path [draw=steelblue31119180, semithick]
(axis cs:10,70.7129405275004)
--(axis cs:10,76.1870594724996);

\path [draw=steelblue31119180, semithick]
(axis cs:100,79.4314009674734)
--(axis cs:100,84.4574879214155);

\path [draw=darkorange25512714, semithick]
(axis cs:0.1,45.1029767169102)
--(axis cs:0.1,51.2970232830897);

\path [draw=darkorange25512714, semithick]
(axis cs:1,65.9796353743818)
--(axis cs:1,71.7203646256181);

\path [draw=darkorange25512714, semithick]
(axis cs:10,78.2015077039125)
--(axis cs:10,83.0984922960875);

\path [draw=darkorange25512714, semithick]
(axis cs:100,89.5531682118761)
--(axis cs:100,93.0468317881239);

\addplot [semithick, steelblue31119180, dotted, mark=*, mark size=3, mark options={solid}]
table {%
0.1 47.9
1 68.475
10 73.45
100 81.9444444444444
};
%\addlegendentry{EPIMC\_Minimax}
\addplot [semithick, darkorange25512714, dotted, mark=square*, mark size=3, mark options={solid}]
table {%
0.1 48.2
1 68.85
10 80.65
100 91.3
};
%\addlegendentry{EPIMC\_ISS}
\end{axis}

\end{tikzpicture}}}
    \subfloat[\centering Phantom Tic-Tac-Toe]{\scalebox{0.6}{% This file was created with tikzplotlib v0.10.1.
\begin{tikzpicture}

\definecolor{darkgray176}{RGB}{176,176,176}
\definecolor{darkorange25512714}{RGB}{255,127,14}
\definecolor{lightgray204}{RGB}{204,204,204}
\definecolor{steelblue31119180}{RGB}{31,119,180}

\begin{axis}[
legend cell align={left},
legend style={fill opacity=0.8, draw opacity=1, text opacity=1, draw=lightgray204},
log basis x={10},
tick align=outside,
tick pos=left,
x grid style={darkgray176},
xlabel={Budget},
xmin=0.0707945784384138, xmax=141.253754462275,
xmode=log,
xtick style={color=black},
xtick={0.001,0.01,0.1,1,10,100,1000,10000},
xticklabels={
  \(\displaystyle {10^{-3}}\),
  \(\displaystyle {10^{-2}}\),
  \(\displaystyle {10^{-1}}\),
  \(\displaystyle {10^{0}}\),
  \(\displaystyle {10^{1}}\),
  \(\displaystyle {10^{2}}\),
  \(\displaystyle {10^{3}}\),
  \(\displaystyle {10^{4}}\)
},
y grid style={darkgray176},
ymin=0, ymax=100,
ytick style={color=black},
ytick=\empty
]
\path [draw=steelblue31119180, semithick]
(axis cs:0.1,47.201023676115)
--(axis cs:0.1,53.398976323885);

\path [draw=steelblue31119180, semithick]
(axis cs:1,62.9602789664647)
--(axis cs:1,67.1397210335353);

\path [draw=steelblue31119180, semithick]
(axis cs:10,58.9407791845729)
--(axis cs:10,64.9592208154271);

\path [draw=steelblue31119180, semithick]
(axis cs:100,61.5341436986934)
--(axis cs:100,67.4658563013066);

\path [draw=darkorange25512714, semithick]
(axis cs:0.1,46.7509818387108)
--(axis cs:0.1,52.9490181612892);

\path [draw=darkorange25512714, semithick]
(axis cs:1,77.5731016177485)
--(axis cs:1,82.5268983822515);

\path [draw=darkorange25512714, semithick]
(axis cs:10,75.6930167970829)
--(axis cs:10,80.8069832029171);

\path [draw=darkorange25512714, semithick]
(axis cs:100,75.4845722547692)
--(axis cs:100,80.6154277452308);

\addplot [semithick, steelblue31119180, dotted, mark=*, mark size=3, mark options={solid}]
table {%
0.1 50.3
1 65.05
10 61.95
100 64.5
};
%\addlegendentry{EPIMC\_CFR}
\addplot [semithick, darkorange25512714, dotted, mark=square*, mark size=3, mark options={solid}]
table {%
0.1 49.85
1 80.05
10 78.25
100 78.05
};
%\addlegendentry{EPIMC\_ISS}
\end{axis}

\end{tikzpicture}}}
    \subfloat[\centering Dark Hex \hspace{20mm}~]{\scalebox{0.6}{% This file was created with tikzplotlib v0.10.1.
\begin{tikzpicture}

\definecolor{darkgray176}{RGB}{176,176,176}
\definecolor{darkorange25512714}{RGB}{255,127,14}
\definecolor{lightgray204}{RGB}{204,204,204}
\definecolor{steelblue31119180}{RGB}{31,119,180}

\begin{axis}[
legend cell align={left},
legend style={fill opacity=0.8, draw opacity=1, text opacity=1, draw=lightgray204},
log basis x={10},
tick align=outside,
tick pos=left,
% legend pos=outer north east,
%title={DarkHex\_CFR},
x grid style={darkgray176},
xlabel={Budget},
xmin=0.0707945784384138, xmax=141.253754462275,
xmode=log,
xtick style={color=black},
xtick={0.001,0.01,0.1,1,10,100,1000,10000},
xticklabels={
  \(\displaystyle {10^{-3}}\),
  \(\displaystyle {10^{-2}}\),
  \(\displaystyle {10^{-1}}\),
  \(\displaystyle {10^{0}}\),
  \(\displaystyle {10^{1}}\),
  \(\displaystyle {10^{2}}\),
  \(\displaystyle {10^{3}}\),
  \(\displaystyle {10^{4}}\)
},
y grid style={darkgray176},
ymin=0, ymax=100,
ytick style={color=black},
ytick=\empty
]
\path [draw=steelblue31119180, semithick]
(axis cs:0.1,32.3379283168701)
--(axis cs:0.1,38.2620716831299);

\path [draw=steelblue31119180, semithick]
(axis cs:1,49.4048441073187)
--(axis cs:1,55.5951558926813);

\path [draw=steelblue31119180, semithick]
(axis cs:10,46.0014699769084)
--(axis cs:10,52.1985300230916);

\path [draw=steelblue31119180, semithick]
(axis cs:100,48.4023627714014)
--(axis cs:100,54.5976372285986);

\path [draw=darkorange25512714, semithick]
(axis cs:0.1,28.815994699034)
--(axis cs:0.1,34.584005300966);

\path [draw=darkorange25512714, semithick]
(axis cs:1,36.0755034296597)
--(axis cs:1,42.1244965703403);

\path [draw=darkorange25512714, semithick]
(axis cs:10,35.5825928905764)
--(axis cs:10,41.6174071094236);

\path [draw=darkorange25512714, semithick]
(axis cs:100,39.7332669734716)
--(axis cs:100,45.8667330265284);

\addplot [semithick, steelblue31119180, dotted, mark=*, mark size=3, mark options={solid}]
table {%
0.1 35.3
1 52.5
10 49.1
100 51.5
};
\addlegendentry{EPIMC\_CFR}
\addplot [semithick, darkorange25512714, dotted, mark=square*, mark size=3, mark options={solid}]
table {%
0.1 31.7
1 39.1
10 38.6
100 42.8
};
\addlegendentry{EPIMC\_ISS}
\end{axis}

\end{tikzpicture}}}

    \caption{\centering Winning rate of EPIMC when the subgame is CFR+ or ISS. The opponent is PIMC with one second of the budget.}

~\label{fig:_subgame}
\end{figure*}

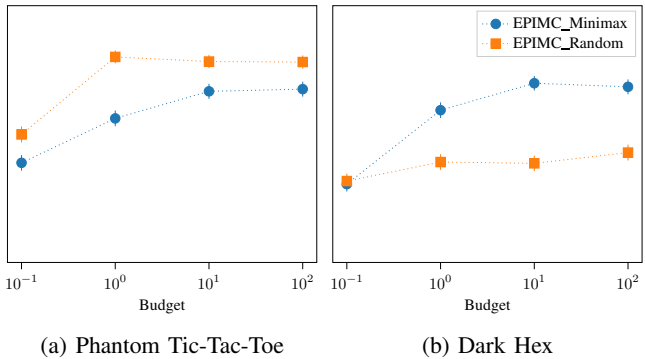
\begin{figure}[!htbp]
    \centering
    
        %\subfloat[\centering Dark Chess]{\scalebox{0.7}{\input{Image/DarkChess}}}
        \subfloat[\centering Phantom Tic-Tac-Toe]{\scalebox{0.6}{% This file was created with tikzplotlib v0.10.1.
\begin{tikzpicture}

\definecolor{darkgray176}{RGB}{176,176,176}
\definecolor{darkorange25512714}{RGB}{255,127,14}
\definecolor{lightgray204}{RGB}{204,204,204}
\definecolor{steelblue31119180}{RGB}{31,119,180}

\begin{axis}[
legend cell align={left},
legend style={fill opacity=0.8, draw opacity=1, text opacity=1, draw=lightgray204},
log basis x={10},
tick align=outside,
tick pos=left,
x grid style={darkgray176},
xlabel={Budget},
xmin=0.0707945784384138, xmax=141.253754462275,
xmode=log,
xtick style={color=black},
xtick={0.001,0.01,0.1,1,10,100,1000,10000},
xticklabels={
  \(\displaystyle {10^{-3}}\),
  \(\displaystyle {10^{-2}}\),
  \(\displaystyle {10^{-1}}\),
  \(\displaystyle {10^{0}}\),
  \(\displaystyle {10^{1}}\),
  \(\displaystyle {10^{2}}\),
  \(\displaystyle {10^{3}}\),
  \(\displaystyle {10^{4}}\)
},
y grid style={darkgray176},
ytick=\empty,
ymin=0, ymax=100,
% ytick style={color=black}
]
\path [draw=steelblue31119180, semithick]
(axis cs:0.1,35.7797169510127)
--(axis cs:0.1,41.8202830489873);

\path [draw=steelblue31119180, semithick]
(axis cs:1,53.0241173520435)
--(axis cs:1,59.1758826479565);

\path [draw=steelblue31119180, semithick]
(axis cs:10,63.727839490377)
--(axis cs:10,69.572160509623);

\path [draw=steelblue31119180, semithick]
(axis cs:100,64.5969826042547)
--(axis cs:100,70.4030173957453);

\path [draw=darkorange25512714, semithick]
(axis cs:0.1,46.7509818387108)
--(axis cs:0.1,52.9490181612892);

\path [draw=darkorange25512714, semithick]
(axis cs:1,77.5731016177485)
--(axis cs:1,82.5268983822515);

\path [draw=darkorange25512714, semithick]
(axis cs:10,75.6930167970829)
--(axis cs:10,80.8069832029171);

\path [draw=darkorange25512714, semithick]
(axis cs:100,75.4845722547692)
--(axis cs:100,80.6154277452308);

\addplot [semithick, steelblue31119180, dotted, mark=*, mark size=3, mark options={solid}]
table {%
0.1 38.8
1 56.1
10 66.65
100 67.5
};
%\addlegendentry{EPIMC\_Minimax}
\addplot [semithick, darkorange25512714, dotted, mark=square*, mark size=3, mark options={solid}]
table {%
0.1 49.85
1 80.05
10 78.25
100 78.05
};
%\addlegendentry{EPIMC\_Random}
\end{axis}

\end{tikzpicture}}}
        \subfloat[\centering Dark Hex]{\scalebox{0.6}{% This file was created with tikzplotlib v0.10.1.
\begin{tikzpicture}

\definecolor{darkgray176}{RGB}{176,176,176}
\definecolor{darkorange25512714}{RGB}{255,127,14}
\definecolor{lightgray204}{RGB}{204,204,204}
\definecolor{steelblue31119180}{RGB}{31,119,180}

\begin{axis}[
legend cell align={left},
legend style={fill opacity=0.8, draw opacity=1, text opacity=1, draw=lightgray204},
log basis x={10},
tick align=outside,
tick pos=left,
% legend pos=outer north east,
%title={DarkHex\_MaxN},
x grid style={darkgray176},
xlabel={Budget},
xmin=0.0707945784384138, xmax=141.253754462275,
xmode=log,
xtick style={color=black},
xtick={0.001,0.01,0.1,1,10,100,1000,10000},
xticklabels={
  \(\displaystyle {10^{-3}}\),
  \(\displaystyle {10^{-2}}\),
  \(\displaystyle {10^{-1}}\),
  \(\displaystyle {10^{0}}\),
  \(\displaystyle {10^{1}}\),
  \(\displaystyle {10^{2}}\),
  \(\displaystyle {10^{3}}\),
  \(\displaystyle {10^{4}}\)
},
y grid style={darkgray176},
ytick=\empty,
ymin=0, ymax=100,
ytick style={color=black}
% ytick style={color=black}
]
\path [draw=steelblue31119180, semithick]
(axis cs:0.1,27.6463651950539)
--(axis cs:0.1,33.3536348049461);

\path [draw=steelblue31119180, semithick]
(axis cs:1,56.2550467957619)
--(axis cs:1,62.3449532042381);

\path [draw=steelblue31119180, semithick]
(axis cs:10,66.95431218578)
--(axis cs:10,72.64568781422);

\path [draw=steelblue31119180, semithick]
(axis cs:100,65.5184400224878)
--(axis cs:100,71.2815599775122);

\path [draw=darkorange25512714, semithick]
(axis cs:0.1,28.815994699034)
--(axis cs:0.1,34.584005300966);

\path [draw=darkorange25512714, semithick]
(axis cs:1,36.0755034296597)
--(axis cs:1,42.1244965703403);

\path [draw=darkorange25512714, semithick]
(axis cs:10,35.5825928905764)
--(axis cs:10,41.6174071094236);

\path [draw=darkorange25512714, semithick]
(axis cs:100,39.7332669734716)
--(axis cs:100,45.8667330265284);

\addplot [semithick, steelblue31119180, dotted, mark=*, mark size=3, mark options={solid}]
table {%
0.1 30.5
1 59.3
10 69.8
100 68.4
};
\addlegendentry{EPIMC\_Minimax}
\addplot [semithick, darkorange25512714, dotted, mark=square*, mark size=3, mark options={solid}]
table {%
0.1 31.7
1 39.1
10 38.6
100 42.8
};
\addlegendentry{EPIMC\_Random}
\end{axis}

\end{tikzpicture}}}
        \caption{\centering Winning rate of EPIMC when the perfect information leaf evaluator is Minimax or Random Rollout. The opponent is PIMC with one second of the budget.}
    ~\label{fig:_leaf}
\end{figure}

In Figure~\ref{fig:_subgame}, we compare CFR+ against Information Set Search (ISS) in the Extended game resolution. 
CFR+ is used with $1000$ iterations. 
Using CFR+ in Dark Hex $4\times 4$ was too costly in computational time, and using it on Dark Hex $3\times 3$.
As can be observed, neither method is superior to the other, worse performances are obtained with CFR+ in Dark Chess and Phantom Tic-Tac-Toe but better performance is achieved in Dark Hex. 

\paragraph{Perfect information leaf evaluator}

In Figure~\ref{fig:_leaf}, we compare  Minimax with alpha-beta pruning against random rollout for the perfect information leaf evaluator. 
Due to the important cost of using Minimax, the test was not conducted in Dark Chess, and as before, the size Dark Hex was reduced to $3\times 3$.
As before, neither method is superior to the other, as worse performance is obtained with Minimax in Phantom Tic-Tac-Toe but stronger performance is achieved in Dark Hex. 
However, the differences between the two methods are important in Dark Hex, where Minimax obtains performance close to $70\%$ while Random Rollout obtains performance close to $50\%$. 

\paragraph{Against other online algorithms}
\begin{figure*}[!htbp]
\centering
    \subfloat[\centering Dark Chess]{\scalebox{0.6}{% This file was created with tikzplotlib v0.10.1.
\begin{tikzpicture}

\definecolor{crimson2143940}{RGB}{214,39,40}
\definecolor{darkgray176}{RGB}{176,176,176}
\definecolor{darkorange25512714}{RGB}{255,127,14}
\definecolor{forestgreen4416044}{RGB}{44,160,44}
\definecolor{lightgray204}{RGB}{204,204,204}
\definecolor{mediumpurple148103189}{RGB}{148,103,189}
\definecolor{sienna1408675}{RGB}{140,86,75}
\definecolor{steelblue31119180}{RGB}{31,119,180}

\begin{axis}[
legend cell align={left},
legend style={
  fill opacity=0.8,
  draw opacity=1,
  text opacity=1,
  at={(0.03,0.97)},
  anchor=north west,
  draw=lightgray204
},
log basis x={10},
tick align=outside,
tick pos=left,
x grid style={darkgray176},
xlabel={Budget},
xmin=0.0630957344480193, xmax=1584.89319246111,
xmode=log,
xtick style={color=black},
xtick={0.001,0.01,0.1,1,10,100,1000,10000,100000},
xticklabels={
  \(\displaystyle {10^{-3}}\),
  \(\displaystyle {10^{-2}}\),
  \(\displaystyle {10^{-1}}\),
  \(\displaystyle {10^{0}}\),
  \(\displaystyle {10^{1}}\),
  \(\displaystyle {10^{2}}\),
  \(\displaystyle {10^{3}}\),
  \(\displaystyle {10^{4}}\)
  \(\displaystyle {10^{5}}\)
},
y grid style={darkgray176},
ylabel={Winning rate},
ymin=0, ymax=100,
ytick style={color=black}
]
\path [draw=steelblue31119180, semithick]
(axis cs:0.1,43.9198406933081)
--(axis cs:0.1,52.6801593066919);

\path [draw=steelblue31119180, semithick]
(axis cs:1,45.7173155294957)
--(axis cs:1,54.4826844705044);

\path [draw=steelblue31119180, semithick]
(axis cs:10,40.837548960504)
--(axis cs:10,49.562451039496);

\path [draw=steelblue31119180, semithick]
(axis cs:100,41.0358937354826)
--(axis cs:100,49.7641062645174);

\path [draw=steelblue31119180, semithick]
(axis cs:1000,41.24)
--(axis cs:1000,49.96);

\path [draw=darkorange25512714, semithick]
(axis cs:0.1,46.7183675042286)
--(axis cs:0.1,55.4816324957714);

\path [draw=darkorange25512714, semithick]
(axis cs:1,57.8475768451388)
--(axis cs:1,66.3524231548612);

\path [draw=darkorange25512714, semithick]
(axis cs:10,61.0247307305995)
--(axis cs:10,69.3752692694005);

\path [draw=darkorange25512714, semithick]
(axis cs:100,64.5318406953513)
--(axis cs:100,72.6681593046487);

\path [draw=darkorange25512714, semithick]
(axis cs:1000,65.8)
--(axis cs:1000,73.8);

\path [draw=forestgreen4416044, semithick]
(axis cs:0.1,45.9173856532887)
--(axis cs:0.1,54.6826143467113);

\path [draw=forestgreen4416044, semithick]
(axis cs:1,65.568074667358)
--(axis cs:1,73.631925332642);

\path [draw=forestgreen4416044, semithick]
(axis cs:10,77.7752568283154)
--(axis cs:10,84.6247431716846);

\path [draw=forestgreen4416044, semithick]
(axis cs:100,87.9298577471276)
--(axis cs:100,93.0701422528724);

\path [draw=forestgreen4416044, semithick]
(axis cs:1000,99.4)
--(axis cs:1000,100);

\path [draw=crimson2143940, semithick]
(axis cs:0.1,1.21129614381393)
--(axis cs:0.1,2.98870385618608);

\path [draw=crimson2143940, semithick]
(axis cs:1,2.53004252384969)
--(axis cs:1,4.86995747615031);

\path [draw=crimson2143940, semithick]
(axis cs:10,12.5269920295678)
--(axis cs:10,15.5730079704322);

\path [draw=crimson2143940, semithick]
(axis cs:100,21.0643210180297)
--(axis cs:100,26.3356789819703);

\path [draw=crimson2143940, semithick]
(axis cs:1000,35.9)
--(axis cs:1000,44.5);

\path [draw=mediumpurple148103189, semithick]
(axis cs:0.1,8.97221513689602)
--(axis cs:0.1,14.627784863104);

\path [draw=mediumpurple148103189, semithick]
(axis cs:1,12.0530431410647)
--(axis cs:1,18.3469568589353);

\path [draw=mediumpurple148103189, semithick]
(axis cs:10,9.691212079233)
--(axis cs:10,15.508787920767);

\path [draw=mediumpurple148103189, semithick]
(axis cs:100,11.1404406278028)
--(axis cs:100,17.2595593721972);

\path [draw=mediumpurple148103189, semithick]
(axis cs:1000,11.2)
--(axis cs:1000,17.2);

\path [draw=sienna1408675, semithick]
(axis cs:0.1,23.1085237762515)
--(axis cs:0.1,30.8914762237485);

\path [draw=sienna1408675, semithick]
(axis cs:1,52.9642691190527)
--(axis cs:1,61.6357308809473);

\path [draw=sienna1408675, semithick]
(axis cs:10,71.0994290997273)
--(axis cs:10,78.7005709002728);

\path [draw=sienna1408675, semithick]
(axis cs:100,89.8493724004003)
--(axis cs:100,94.5506275995997);

\path [draw=sienna1408675, semithick]
(axis cs:1000,72.05)
--(axis cs:1000,79.55);

\addplot [semithick, steelblue31119180, dotted, mark=*, mark size=3, mark options={solid}]
table {%
0.1 48.3
1 50.1
10 45.2
100 45.4
1000 45.6
};
% \addlegendentry{PIMC}
\addplot [semithick, darkorange25512714, dotted, mark=*, mark size=3, mark options={solid}]
table {%
0.1 51.1
1 62.1
10 65.2
100 68.6
1000 69.8
};
% \addlegendentry{EPIMC\_D2}
\addplot [semithick, forestgreen4416044, dotted, mark=pentagon*, mark size=3, mark options={solid}]
table {%
0.1 50.3
1 69.6
10 81.2
100 90.5
1000 99.8
};
% \addlegendentry{EPIMC\_D3}
\addplot [semithick, crimson2143940, dotted, mark=square*, mark size=3, mark options={solid}]
table {%
0.1 2.1
1 3.7
10 14.05
100 23.7
1000 40.2
};
% \addlegendentry{OOS}
\addplot [semithick, mediumpurple148103189, dotted, mark=triangle*, mark size=3, mark options={solid}]
table {%
0.1 11.8
1 15.2
10 12.6
100 14.2
1000 14.2
};
% \addlegendentry{IIMC}
\addplot [semithick, sienna1408675, dotted, mark=diamond*, mark size=3, mark options={solid}]
table {%
0.1 27
1 57.3
10 74.9
100 92.2
1000 75.8
};
% \addlegendentry{IS-MCTS}
\addplot [semithick, black, dotted, mark=halfcircle, mark size=3, mark options={solid}]
table {%
0.1 3.4
1 3.4
10 3.4
100 3.4
1000 3.4
};
%\addlegendentry{Random}
\end{axis}

\end{tikzpicture}}}
    \subfloat[\centering Phantom Tic-Tac-Toe]{\scalebox{0.6}{% This file was created with tikzplotlib v0.10.1.
\begin{tikzpicture}

\definecolor{crimson2143940}{RGB}{214,39,40}
\definecolor{darkgray176}{RGB}{176,176,176}
\definecolor{darkorange25512714}{RGB}{255,127,14}
\definecolor{forestgreen4416044}{RGB}{44,160,44}
\definecolor{lightgray204}{RGB}{204,204,204}
\definecolor{mediumpurple148103189}{RGB}{148,103,189}
\definecolor{sienna1408675}{RGB}{140,86,75}
\definecolor{steelblue31119180}{RGB}{31,119,180}

\begin{axis}[
legend cell align={left},
legend style={
  fill opacity=0.8,
  draw opacity=1,
  text opacity=1,
  at={(0.5,0.09)},
  anchor=south,
  draw=lightgray204
},
log basis x={10},
tick align=outside,
tick pos=left,
x grid style={darkgray176},
xlabel={Budget},
xmin=0.0707945784384138, xmax=141.253754462275,
xmode=log,
xtick style={color=black},
xtick={0.001,0.01,0.1,1,10,100,1000,10000},
xticklabels={
  \(\displaystyle {10^{-3}}\),
  \(\displaystyle {10^{-2}}\),
  \(\displaystyle {10^{-1}}\),
  \(\displaystyle {10^{0}}\),
  \(\displaystyle {10^{1}}\),
  \(\displaystyle {10^{2}}\),
  \(\displaystyle {10^{3}}\),
  \(\displaystyle {10^{4}}\)
  \(\displaystyle {10^{5}}\)
},
y grid style={darkgray176},
ytick=\empty,
ymin=0, ymax=100,
ytick style={color=black}
]
\path [draw=steelblue31119180, semithick]
(axis cs:0.1,42.028681465736)
--(axis cs:0.1,50.771318534264);

\path [draw=steelblue31119180, semithick]
(axis cs:1,42.028681465736)
--(axis cs:1,50.771318534264);

\path [draw=steelblue31119180, semithick]
(axis cs:10,45.5173155294956)
--(axis cs:10,54.2826844705043);

\path [draw=steelblue31119180, semithick]
(axis cs:100,47.6208143222741)
--(axis cs:100,56.3791856777259);

\path [draw=darkorange25512714, semithick]
(axis cs:0.1,63.7044347261947)
--(axis cs:0.1,71.8955652738053);

\path [draw=darkorange25512714, semithick]
(axis cs:1,65.0496800363428)
--(axis cs:1,73.1503199636572);

\path [draw=darkorange25512714, semithick]
(axis cs:10,60.8191771144905)
--(axis cs:10,69.1808228855095);

\path [draw=darkorange25512714, semithick]
(axis cs:100,63.601106213623)
--(axis cs:100,71.798893786377);

\path [draw=forestgreen4416044, semithick]
(axis cs:0.1,59.9977673134392)
--(axis cs:0.1,68.4022326865608);

\path [draw=forestgreen4416044, semithick]
(axis cs:1,76.3872884593238)
--(axis cs:1,83.4127115406762);

\path [draw=forestgreen4416044, semithick]
(axis cs:10,74.6868847303193)
--(axis cs:10,81.9131152696807);

\path [draw=forestgreen4416044, semithick]
(axis cs:100,75.8550106843603)
--(axis cs:100,82.9449893156398);

\path [draw=crimson2143940, semithick]
(axis cs:0.1,10.5896889642581)
--(axis cs:0.1,14.7103110357419);

\path [draw=crimson2143940, semithick]
(axis cs:1,16.4734044292466)
--(axis cs:1,21.3265955707534);

\path [draw=crimson2143940, semithick]
(axis cs:10,14.1993991219683)
--(axis cs:10,18.8006008780317);

\path [draw=crimson2143940, semithick]
(axis cs:100,16.9967113671645)
--(axis cs:100,21.9032886328355);

\path [draw=mediumpurple148103189, semithick]
(axis cs:0.1,19.311240425292)
--(axis cs:0.1,26.688759574708);

\path [draw=mediumpurple148103189, semithick]
(axis cs:1,20.1618015932805)
--(axis cs:1,27.6381984067195);

\path [draw=mediumpurple148103189, semithick]
(axis cs:10,23.2994916597961)
--(axis cs:10,31.1005083402039);

\path [draw=mediumpurple148103189, semithick]
(axis cs:100,21.2994290997273)
--(axis cs:100,28.9005709002727);

\path [draw=sienna1408675, semithick]
(axis cs:0.1,48.9268626877264)
--(axis cs:0.1,57.6731373122737);

\path [draw=sienna1408675, semithick]
(axis cs:1,59.5875409328992)
--(axis cs:1,68.0124590671008);

\path [draw=sienna1408675, semithick]
(axis cs:10,62.4658958155363)
--(axis cs:10,70.7341041844637);

\path [draw=sienna1408675, semithick]
(axis cs:100,52.3568328201645)
--(axis cs:100,61.0431671798355);

\addplot [semithick, steelblue31119180, dotted, mark=*, mark size=3, mark options={solid}]
table {%
0.1 46.4
1 46.4
10 49.9
100 52
};
% \addlegendentry{PIMC}
\addplot [semithick, darkorange25512714, dotted, mark=*, mark size=3, mark options={solid}]
table {%
0.1 67.8
1 69.1
10 65
100 67.7
};
% \addlegendentry{EPIMC\_D2}
\addplot [semithick, forestgreen4416044, dotted, mark=pentagon*, mark size=3, mark options={solid}]
table {%
0.1 64.2
1 79.9
10 78.3
100 79.4
};
% \addlegendentry{EPIMC\_D3}
\addplot [semithick, crimson2143940, dotted, mark=square*, mark size=3, mark options={solid}]
table {%
0.1 12.65
1 18.9
10 16.5
100 19.45
};
% \addlegendentry{OOS}
\addplot [semithick, mediumpurple148103189, dotted, mark=triangle*, mark size=3, mark options={solid}]
table {%
0.1 23
1 23.9
10 27.2
100 25.1
};
% \addlegendentry{IIMC}
\addplot [semithick, sienna1408675, dotted, mark=diamond*, mark size=3, mark options={solid}]
table {%
0.1 53.3
1 63.8
10 66.6
100 56.7
};
% \addlegendentry{IS MCTS}
\addplot [semithick, black, dotted, mark=halfcircle, mark size=3, mark options={solid}]
table {%
0.1 12.1
1 12.1
10 12.1
100 12.1
};
\end{axis}

\end{tikzpicture}}}
    \subfloat[\centering Dark Hex \hspace{20mm}~]{\scalebox{0.6}{% This file was created with tikzplotlib v0.10.1.
\begin{tikzpicture}

\definecolor{crimson2143940}{RGB}{214,39,40}
\definecolor{darkgray176}{RGB}{176,176,176}
\definecolor{darkorange25512714}{RGB}{255,127,14}
\definecolor{forestgreen4416044}{RGB}{44,160,44}
\definecolor{lightgray204}{RGB}{204,204,204}
\definecolor{mediumpurple148103189}{RGB}{148,103,189}
\definecolor{sienna1408675}{RGB}{140,86,75}
\definecolor{steelblue31119180}{RGB}{31,119,180}

\begin{axis}[
legend cell align={left},
legend style={
  fill opacity=0.8,
  draw opacity=1,
  text opacity=1,
  at={(0.03,0.97)},
  anchor=north west,
  draw=lightgray204
},
legend pos=outer north east,
log basis x={10},
tick align=outside,
tick pos=left,
x grid style={darkgray176},
xlabel={Budget},
xmin=0.0630957344480193, xmax=1584.89319246111,
xmode=log,
xtick style={color=black},
xtick={0.001,0.01,0.1,1,10,100,1000,10000,100000},
xticklabels={
  \(\displaystyle {10^{-3}}\),
  \(\displaystyle {10^{-2}}\),
  \(\displaystyle {10^{-1}}\),
  \(\displaystyle {10^{0}}\),
  \(\displaystyle {10^{1}}\),
  \(\displaystyle {10^{2}}\),
  \(\displaystyle {10^{3}}\),
  \(\displaystyle {10^{4}}\)
  \(\displaystyle {10^{5}}\)
},
y grid style={darkgray176},
ymin=0, ymax=100,
ytick=\empty,
ytick style={color=black}
]
\path [draw=steelblue31119180, semithick]
(axis cs:0.1,50.437548960504)
--(axis cs:0.1,59.162451039496);

\path [draw=steelblue31119180, semithick]
(axis cs:1,43.4215512664872)
--(axis cs:1,52.1784487335128);

\path [draw=steelblue31119180, semithick]
(axis cs:10,43.0232361763513)
--(axis cs:10,51.7767638236487);

\path [draw=steelblue31119180, semithick]
(axis cs:100,40.0448817641768)
--(axis cs:100,48.7551182358232);

\path [draw=steelblue31119180, semithick]
(axis cs:1000,43.4)
--(axis cs:1000,53);

\path [draw=darkorange25512714, semithick]
(axis cs:0.1,49.8327962777081)
--(axis cs:0.1,58.5672037222919);

\path [draw=darkorange25512714, semithick]
(axis cs:1,57.3368851049966)
--(axis cs:1,65.8631148950034);

\path [draw=darkorange25512714, semithick]
(axis cs:10,65.568074667358)
--(axis cs:10,73.631925332642);

\path [draw=darkorange25512714, semithick]
(axis cs:100,57.9497726517279)
--(axis cs:100,66.4502273482721);

\path [draw=darkorange25512714, semithick]
(axis cs:1000,42.23)
--(axis cs:1000,50.97);

\path [draw=forestgreen4416044, semithick]
(axis cs:0.1,42.4262917298933)
--(axis cs:0.1,51.1737082701067);

\path [draw=forestgreen4416044, semithick]
(axis cs:1,56.08081464656)
--(axis cs:1,63.3477567820115);

\path [draw=forestgreen4416044, semithick]
(axis cs:10,61.4360815992626)
--(axis cs:10,69.7639184007375);

\path [draw=forestgreen4416044, semithick]
(axis cs:100,80.7865562397951)
--(axis cs:100,87.2134437602049);

\path [draw=forestgreen4416044, semithick]
(axis cs:1000,88.8)
--(axis cs:1000,93.8);

\path [draw=crimson2143940, semithick]
(axis cs:0.1,0.00878478557193102)
--(axis cs:0.1,0.791215214428069);

\path [draw=crimson2143940, semithick]
(axis cs:1,0.00878478557193102)
--(axis cs:1,0.791215214428069);

\path [draw=crimson2143940, semithick]
(axis cs:10,0.121342477338964)
--(axis cs:10,1.07865752266104);

\path [draw=crimson2143940, semithick]
(axis cs:100,0.203837033404174)
--(axis cs:100,1.35171852215138);

\path [draw=crimson2143940, semithick]
(axis cs:1000,0.02)
--(axis cs:1000,1.58);

\path [draw=mediumpurple148103189, semithick]
(axis cs:0.1,-0.191607803803755)
--(axis cs:0.1,0.591607803803755);

\path [draw=mediumpurple148103189, semithick]
(axis cs:1,0.634632061535928)
--(axis cs:1,2.96536793846407);

\path [draw=mediumpurple148103189, semithick]
(axis cs:10,0.634632061535928)
--(axis cs:10,2.96536793846407);

\path [draw=mediumpurple148103189, semithick]
(axis cs:100,1.50473707997557)
--(axis cs:100,4.49526292002443);

\path [draw=mediumpurple148103189, semithick]
(axis cs:1000,0.66)
--(axis cs:1000,2.86);

\path [draw=sienna1408675, semithick]
(axis cs:0.1,21.5844483114496)
--(axis cs:0.1,29.2155516885504);

\path [draw=sienna1408675, semithick]
(axis cs:1,53.8766469457145)
--(axis cs:1,62.5233530542855);

\path [draw=sienna1408675, semithick]
(axis cs:10,78.4179235916378)
--(axis cs:10,85.1820764083622);

\path [draw=sienna1408675, semithick]
(axis cs:100,64.9551045916049)
--(axis cs:100,72.3782287417285);

\path [draw=sienna1408675, semithick]
(axis cs:1000,60.82)
--(axis cs:1000,69.18);

\addplot [semithick, steelblue31119180, dotted, mark=*, mark size=3, mark options={solid}]
table {%
0.1 54.8
1 47.8
10 47.4
100 44.4
1000 48.2
};
\addlegendentry{PIMC}
\addplot [semithick, darkorange25512714, dotted, mark=*, mark size=3, mark options={solid}]
table {%
0.1 54.2
1 61.6
10 69.6
100 62.2
1000 46.6
};
\addlegendentry{EPIMC\_D2}
\addplot [semithick, forestgreen4416044, dotted, mark=pentagon*, mark size=3, mark options={solid}]
table {%
0.1 46.8
1 59.7142857142857
10 65.6
100 84
1000 91.3
};
\addlegendentry{EPIMC\_D3}
\addplot [semithick, crimson2143940, dotted, mark=square*, mark size=3, mark options={solid}]
table {%
0.1 0.4
1 0.4
10 0.6
100 0.777777777777778
1000 0.8
};
\addlegendentry{OOS}
\addplot [semithick, mediumpurple148103189, dotted, mark=triangle*, mark size=3, mark options={solid}]
table {%
0.1 0.2
1 1.8
10 1.8
100 3
1000 1.76
};
\addlegendentry{IIMC}
\addplot [semithick, sienna1408675, dotted, mark=diamond*, mark size=3, mark options={solid}]
table {%
0.1 25.4
1 58.2
10 81.8
100 68.6666666666667
1000 65
};
\addlegendentry{IS MCTS}
\addplot [semithick, black, dotted, mark=halfcircle, mark size=3, mark options={solid}]
table {%
0.1 0.5
1 0.5
10 0.5
100 0.5
1000 0.5
};
\addlegendentry{Random}
\end{axis}

\end{tikzpicture}}}

    \caption{\centering Winning rate of online algorithms. The opponent is PIMC with one second of the budget.}

~\label{fig:vsPIMC}
\end{figure*}

In Figure~\ref{fig:vsPIMC}, a comparative analysis between EPIMC and various online algorithms against opponents is presented. 
It is evident from the results that EPIMC and IS-MCTS exhibit superior performance compared to other algorithms. 
Notably, EPIMC achieved remarkable success, particularly at depth $3$, outperforming IS-MCTS across all benchmarks considered. 
Furthermore, it is noteworthy that even at a depth of $2$, EPIMC demonstrated comparable or even superior performance to IS-MCTS in games like Dark Chess and Phantom Tic-Tac-Toe.

\subsection{Discussion}

As we have seen and as was expected, increasing the depth has a significant impact on games with a private observation (Dark Hex / Dark Chess / Phantom Tic-Tac-Toe) and much less impact on games without much private information (Card Game / Battleship). 
On the domains tested and given a sufficient budget, augmenting the depth consistently outperforms the basic version. 
In addition, a depth of $2$ is sufficient, in most cases, to beat the state-of-the-art online algorithms. 
With regard to the other hyperparameters, we recommend using Information Set Search and Random Rollout as they require less computing time and therefore can be computed on larger games. 
However, customization could be beneficial in order to increase performance or the need to maintain theoretical properties in the subgame. 
Especially, using algorithms such as CFR+ allows to obtain the properties of CFR+ (in the subgame) such as the convergence towards the Nash equilibrium in two players.

\section{Related Work}
~\label{sec:RelatedWork}

In determinization-based algorithms, both IIMC~\cite{furtak_recursive_2013} and IS-MCTS~\cite{cowling_information_2012} extend reasoning beyond a depth of one. 
IS-MCTS increases depth with the budget, and IIMC recursively calls PIMC until the game's end. 
In both cases, as the depth exceeds PIMC, it is expected that strategy fusion will be reduced. 
However, there is no guarantee that it can be entirely eliminated, and it is challenging to quantify the extent of its reduction. 
This is because the two methods do not uniformly explore the state space, and worse, they use exploration/exploitation mechanisms based on estimates that are distorted by the presence of strategy fusion.

PIMC and our work share similarities with Unsafe/Safe Subgame Solving~\cite{burch_solving_2014}. 
However, unlike Unsafe/Safe algorithms, our method does not require an expensive pre-computed value function. 
Unsafe/Safe algorithms often solve an abstraction of the game that includes the subgame with leaf estimation and refine this during play. 
Moreover, their methods rarely detail subgame construction during the game, unlike our determinization method, which uses sampling and fast leaf evaluators.

In a study by Zhang et al.~\cite{zhang2021subgame}, a similar approach used a perfect leaf evaluator at an extended depth compared to PIMC, focusing on 'common knowledge' to reduce costs without abstraction. 
Their research did not explore determination algorithms or key aspects central to our investigation, such as sub-game creation, budget considerations, and the influence of depth on strategy fusion and private/public observations. 
Additionally, their move ordering, prioritizing promising nodes over uniform state space exploration, may lead to strategy fusion and suboptimal decision-making, issues we address in our study.

\section{Conclusion and future works}
~\label{sec:conclusion}

In this paper, we introduce a novel online algorithm `Extended PIMC' for games with imperfect information. 
Building upon the foundation of PIMC, our approach postpones the perfect leaf evaluator to a deeper depth. 
Thanks to that, we have been able to successfully reduce past problems of PIMC and beat other online algorithms on multiple benchmarks. 
Especially, when benchmarks have hidden observation, significant performance improvements are observed. 
Furthermore, we conducted an in-depth analysis of various hyperparameters to provide a comprehensive understanding of their impact.

We enhance our research by presenting theoretical foundations for determinization-based algorithms that suffer from strategy fusion. 
We demonstrate that, in the worst case, increasing the depth does not increase the strategy fusion and in every case, there exists a depth $\depth$ such that the strategy fusion is strictly reduced.

As our algorithm is online, it has the advantage of being tested in a short period of time, especially in comparison to recent algorithms which need a domain-specific abstraction or a very high initialization cost due to neural networks. 
Even though, improving our algorithm by using deep learning could have led to superior performance. 
In particular, this could provide a better and faster approximation to the leaf or being able to remove the problem of non-locality by adding an inference system~\cite{rebstock_policy_2019}. 

In future research, it would be intriguing to assess our algorithm in conjunction with other determination algorithms. 
Particularly, exploring a trade-off between our approach, which ensures non-strategy fusion at a certain depth thanks to uniform sampling, and other algorithms that may explore the state space more efficiently but currently encounter issues with strategy fusion during exploration, could be beneficial. 
Such an investigation could shed light on the strengths and limitations of different approaches and potentially lead to the development of more robust and efficient algorithms for imperfect information games.

\bibliographystyle{IEEEtran}
\bibliography{Doctorat}

% Generated by IEEEtran.bst, version: 1.12 (2007/01/11)
\begin{thebibliography}{10}
\providecommand{\url}[1]{#1}
\csname url@samestyle\endcsname
\providecommand{\newblock}{\relax}
\providecommand{\bibinfo}[2]{#2}
\providecommand{\BIBentrySTDinterwordspacing}{\spaceskip=0pt\relax}
\providecommand{\BIBentryALTinterwordstretchfactor}{4}
\providecommand{\BIBentryALTinterwordspacing}{\spaceskip=\fontdimen2\font plus
\BIBentryALTinterwordstretchfactor\fontdimen3\font minus
  \fontdimen4\font\relax}
\providecommand{\BIBforeignlanguage}[2]{{%
\expandafter\ifx\csname l@#1\endcsname\relax
\typeout{** WARNING: IEEEtran.bst: No hyphenation pattern has been}%
\typeout{** loaded for the language `#1'. Using the pattern for}%
\typeout{** the default language instead.}%
\else
\language=\csname l@#1\endcsname
\fi
#2}}
\providecommand{\BIBdecl}{\relax}
\BIBdecl

\bibitem{Silver2016MasteringTG}
D.~Silver, A.~Huang, C.~J. Maddison, A.~Guez, L.~Sifre, G.~van~den Driessche,
  J.~Schrittwieser, I.~Antonoglou, V.~Panneershelvam, M.~Lanctot, S.~Dieleman,
  D.~Grewe, J.~Nham, N.~Kalchbrenner, I.~Sutskever, T.~Lillicrap, M.~Leach,
  K.~Kavukcuoglu, T.~Graepel, and D.~Hassabis, ``Mastering the game of go with
  deep neural networks and tree search,'' \emph{Nature}, vol. 529, pp.
  484--489, 2016.

\bibitem{brown_superhuman_2019}
N.~Brown and T.~Sandholm, ``Superhuman {AI} for multiplayer poker,''
  \emph{Science}, vol. 365, pp. 885 -- 890, 2019.

\bibitem{furtak_recursive_2013}
T.~Furtak and M.~Buro, ``Recursive {Monte} {Carlo} search for imperfect
  information games,'' \emph{2013 IEEE Conference on Computational Inteligence
  in Games (CIG)}, pp. 1--8, 2013.

\bibitem{bouzy_recursive_2020}
B.~Bouzy, A.~Rimbaud, and V.~Ventos, ``Recursive {Monte} {Carlo} {Search} for
  {Bridge} {Card} {Play},'' \emph{2020 IEEE Conference on Games (CoG)}, pp.
  229--236, 2020.

\bibitem{silver_mastering_2017}
D.~Silver, T.~Hubert, J.~Schrittwieser, I.~Antonoglou, M.~Lai, A.~Guez,
  M.~Lanctot, L.~Sifre, D.~Kumaran, T.~Graepel, T.~P. Lillicrap, K.~Simonyan,
  and D.~Hassabis, ``Mastering {Chess} and {Shogi} by {Self}-{Play} with a
  {General} {Reinforcement} {Learning} {Algorithm},'' \emph{ArXiv}, vol.
  abs/1712.01815, 2017.

\bibitem{silver_general_2018}
D.~Silver, T.~Hubert, J.~Schrittwieser, I.~Antonoglou, M.~Lai, A.~Guez,
  M.~Lanctot, L.~Sifre, D.~Kumaran, T.~Graepel \emph{et~al.}, ``A general
  reinforcement learning algorithm that masters chess, shogi, and go through
  self-play,'' \emph{Science}, vol. 362, no. 6419, pp. 1140--1144, 2018.

\bibitem{neller2013introduction}
T.~W. Neller and M.~Lanctot, ``An introduction to counterfactual regret
  minimization,'' in \emph{Proceedings of model AI assignments, the fourth
  symposium on educational advances in artificial intelligence (EAAI-2013)},
  vol.~11, 2013.

\bibitem{tammelin_solving_2015}
O.~Tammelin, N.~Burch, M.~B. Johanson, and M.~Bowling, ``Solving {Heads}-{Up}
  {Limit} {Texas} {Hold}'em,'' in \emph{{IJCAI}}, 2015.

\bibitem{lanctot_monte_2009}
M.~Lanctot, K.~Waugh, M.~A. Zinkevich, and M.~Bowling, ``Monte {Carlo}
  {Sampling} for {Regret} {Minimization} in {Extensive} {Games},'' in
  \emph{{NIPS}}, 2009.

\bibitem{long_understanding_2010}
J.~R. Long, N.~R. Sturtevant, M.~Buro, and T.~Furtak, ``Understanding the
  {Success} of {Perfect} {Information} {Monte} {Carlo} {Sampling} in {Game}
  {Tree} {Search},'' in \emph{{AAAI}}, 2010.

\bibitem{cowling_information_2012}
P.~I. Cowling, E.~J. Powley, and D.~Whitehouse, ``Information {Set} {Monte}
  {Carlo} {Tree} {Search},'' \emph{IEEE Transactions on Computational
  Intelligence and AI in Games}, vol.~4, pp. 120--143, 2012.

\bibitem{cazenave_alpha_2021}
T.~Cazenave and V.~Ventos, ``The $\alpha\mu$ {Search} {Algorithm} for the
  {Game} of {Bridge},'' in \emph{{Monte Carlo Search at IJCAI}}, ser.
  Communications in Computer and Information Science, 2021.

\bibitem{moravcik_deepstack_2017}
M.~Moravcík, M.~Schmid, N.~Burch, V.~Lisý, D.~Morrill, N.~Bard, T.~Davis,
  K.~Waugh, M.~B. Johanson, and M.~H. Bowling, ``{DeepStack}: {Expert}-level
  artificial intelligence in heads-up no-limit poker,'' \emph{Science}, vol.
  356, pp. 508 -- 513, 2017.

\bibitem{jiang_deltadou_2019}
Q.~Jiang, K.~Li, B.~Du, H.~Chen, and H.~Fang, ``{DeltaDou}: {Expert}-level
  {Doudizhu} {AI} through {Self}-play,'' in \emph{{IJCAI}}, 2019.

\bibitem{zhang_combining_2021}
Y.~Zhang, D.~Yan, B.~Shi, H.~Fu, Q.~Fu, H.~Su, J.~Zhu, and N.~Chen, ``Combining
  {Tree} {Search} and {Action} {Prediction} for {State}-of-the-{Art}
  {Performance} in {DouDiZhu},'' in \emph{{IJCAI}}, 2021.

\bibitem{schmid_player_2021}
M.~Schmid, M.~Moravcík, N.~Burch, R.~Kadlec, J.~Davidson, K.~Waugh, N.~Bard,
  F.~Timbers, M.~Lanctot, Z.~Holland, E.~Davoodi, A.~Christianson, and M.~H.
  Bowling, ``Player of {Games},'' \emph{ArXiv}, vol. abs/2112.03178, 2021.

\bibitem{frank_search_1998}
I.~Frank and D.~A. Basin, ``Search in {Games} with {Incomplete} {Information}:
  {A} {Case} {Study} {Using} {Bridge} {Card} {Play},'' \emph{Artif. Intell.},
  vol. 100, pp. 87--123, 1998.

\bibitem{schofield_general_2019}
M.~J. Schofield and M.~Thielscher, ``General {Game} {Playing} with {Imperfect}
  {Information},'' \emph{J. Artif. Intell. Res.}, vol.~66, pp. 901--935, 2019.

\bibitem{kovarik_rethinking_2022}
V.~Kovařík, M.~Schmid, N.~Burch, M.~H. Bowling, and V.~Lisý, ``Rethinking
  {Formal} {Models} of {Partially} {Observable} {Multiagent} {Decision}
  {Making},'' \emph{Artif. Intell.}, vol. 303, p. 103645, 2022.

\bibitem{knuth_analysis_1975}
\BIBentryALTinterwordspacing
D.~E. Knuth and R.~W. Moore, ``An analysis of alpha-beta pruning,''
  \emph{Artificial Intelligence}, vol.~6, no.~4, pp. 293--326, 1975. [Online].
  Available:
  \url{https://www.sciencedirect.com/science/article/pii/0004370275900193}
\BIBentrySTDinterwordspacing

\bibitem{browne_survey_2012}
C.~Browne, E.~J. Powley, D.~Whitehouse, S.~M.~M. Lucas, P.~I. Cowling,
  P.~Rohlfshagen, S.~Tavener, D.~P. Liebana, S.~Samothrakis, and S.~Colton, ``A
  {Survey} of {Monte} {Carlo} {Tree} {Search} {Methods},'' \emph{IEEE
  Transactions on Computational Intelligence and AI in Games}, vol.~4, pp.
  1--43, 2012.

\bibitem{arjonilla2023mixture}
J.~Arjonilla, A.~Saffidine, and T.~Cazenave, ``Mixture of public and private
  distributions in imperfect information games,'' in \emph{2023 IEEE Conference
  on Games (CoG)}.\hskip 1em plus 0.5em minus 0.4em\relax IEEE, 2023, pp. 1--8.

\bibitem{parker2006overconfidence}
A.~Parker, D.~Nau, and V.~Subrahmanian, ``Overconfidence or paranoia? search in
  imperfect-information games,'' in \emph{PROCEEDINGS OF THE NATIONAL
  CONFERENCE ON ARTIFICIAL INTELLIGENCE}, vol.~21.\hskip 1em plus 0.5em minus
  0.4em\relax Menlo Park, CA; Cambridge, MA; London; AAAI Press; MIT Press;
  1999, 2006, p. 1045.

\bibitem{parker2006paranoia}
------, ``Paranoia versus overconfidence in imperfect information games,'' in
  \emph{AAAI Proceedings of the 21st National Conference on Artificial
  intelligence}, vol.~2, 2006.

\bibitem{lanctot_openspiel_2019}
M.~Lanctot, E.~Lockhart, J.-B. Lespiau, V.~F. Zambaldi, S.~Upadhyay,
  J.~Pérolat, S.~Srinivasan, F.~Timbers, K.~Tuyls, S.~Omidshafiei, D.~Hennes,
  D.~Morrill, P.~Muller, T.~Ewalds, R.~Faulkner, J.~Kramár, B.~D. Vylder,
  B.~Saeta, J.~Bradbury, D.~Ding, S.~Borgeaud, M.~Lai, J.~Schrittwieser, T.~W.
  Anthony, E.~Hughes, I.~Danihelka, and J.~Ryan-Davis, ``{OpenSpiel}: {A}
  {Framework} for {Reinforcement} {Learning} in {Games},'' \emph{ArXiv}, vol.
  abs/1908.09453, 2019.

\bibitem{lisy_online_2015}
V.~Lisý, M.~Lanctot, and M.~Bowling, ``Online {Monte} {Carlo} {Counterfactual}
  {Regret} {Minimization} for {Search} in {Imperfect} {Information} {Games},''
  in \emph{{AAMAS}}, 2015.

\bibitem{burch_solving_2014}
N.~Burch, M.~B. Johanson, and M.~Bowling, ``Solving {Imperfect} {Information}
  {Games} {Using} {Decomposition},'' in \emph{{AAAI}}, 2014.

\bibitem{zhang2021subgame}
B.~Zhang and T.~Sandholm, ``Subgame solving without common knowledge,''
  \emph{Advances in Neural Information Processing Systems}, vol.~34, pp.
  23\,993--24\,004, 2021.

\bibitem{rebstock_policy_2019}
D.~Rebstock, C.~Solinas, M.~Buro, and N.~R. Sturtevant, ``Policy {Based}
  {Inference} in {Trick}-{Taking} {Card} {Games},'' \emph{2019 IEEE Conference
  on Games (CoG)}, pp. 1--8, 2019.

\end{thebibliography}

\newpage
\appendix

\section{Complementary experiments}
In the following, we expand upon the main paper's experiments by: (i) adding Card Game and Battleship; (ii) replicating the experiment that compares the depth  against IS-MCTS instead of PIMC. % and (iii) testing the impact of different exploration/exploitation strategies in EPIMC, acknowledging the risk of not guaranteeing reduced strategy fusion.

As in the main paper, the default settings for EPIMC include a fixed depth of 3, information set search for subgame resolution, and random rollouts as the perfect information leaf evaluator. The opponent uses one second of reasoning, while the tested algorithms use between 0.1 to 100 seconds.

As discussed in the main paper, Card Game and Battleship primarily involve public actions, and therefore as limited impact on our algorithms. 

\subsection{Postponing leaf evaluator}

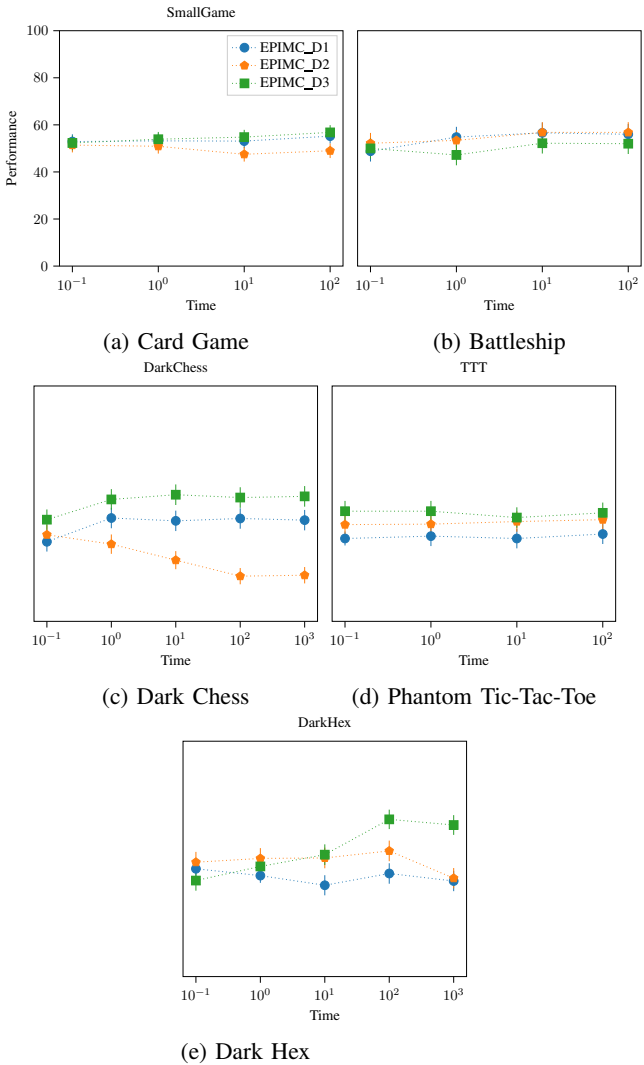
\begin{figure}[!htbp]
    \centering
    
        \subfloat[\centering Card Game]{\scalebox{0.55}{% This file was created with tikzplotlib v0.10.1.
\begin{tikzpicture}

\definecolor{darkgray176}{RGB}{176,176,176}
\definecolor{darkorange25512714}{RGB}{255,127,14}
\definecolor{forestgreen4416044}{RGB}{44,160,44}
\definecolor{lightgray204}{RGB}{204,204,204}
\definecolor{steelblue31119180}{RGB}{31,119,180}

\begin{axis}[
legend cell align={left},
legend style={fill opacity=0.8, draw opacity=1, text opacity=1, draw=lightgray204},
log basis x={10},
tick align=outside,
tick pos=left,
title={SmallGame},
x grid style={darkgray176},
xlabel={Time},
xmin=0.0707945784384138, xmax=141.253754462275,
xmode=log,
xtick style={color=black},
xtick={0.001,0.01,0.1,1,10,100,1000,10000},
xticklabels={
  $\displaystyle{10^{-3}}$,
  $\displaystyle{10^{-2}}$,
  $\displaystyle{10^{-1}}$,
  $\displaystyle{10^{0}}$,
  $\displaystyle{10^{1}}$,
  $\displaystyle{10^{2}}$,
  $\displaystyle{10^{3}}$,
  $\displaystyle{10^{4}}$
},
y grid style={darkgray176},
ylabel={Performance},
ymin=0, ymax=100,
ytick style={color=black}
]
\path [draw=steelblue31119180, semithick]
(axis cs:0.1,49.806184856201)
--(axis cs:0.1,55.993815143799);

\path [draw=steelblue31119180, semithick]
(axis cs:1,50.1073212232759)
--(axis cs:1,56.2926787767242);

\path [draw=steelblue31119180, semithick]
(axis cs:10,50.0069299678151)
--(axis cs:10,56.1930700321849);

\path [draw=steelblue31119180, semithick]
(axis cs:100,52.1177730232833)
--(axis cs:100,58.2822269767167);

\path [draw=darkorange25512714, semithick]
(axis cs:0.1,48.3021829518191)
--(axis cs:0.1,54.4978170481809);

\path [draw=darkorange25512714, semithick]
(axis cs:1,47.8014699769084)
--(axis cs:1,53.9985300230916);

\path [draw=darkorange25512714, semithick]
(axis cs:10,44.4048441073187)
--(axis cs:10,50.5951558926813);

\path [draw=darkorange25512714, semithick]
(axis cs:100,45.9015877614494)
--(axis cs:100,52.0984122385506);

\path [draw=forestgreen4416044, semithick]
(axis cs:0.1,49.2042484053142)
--(axis cs:0.1,55.3957515946859);

\path [draw=forestgreen4416044, semithick]
(axis cs:1,50.8104095313456)
--(axis cs:1,56.9895904686544);

\path [draw=forestgreen4416044, semithick]
(axis cs:10,51.7152812873781)
--(axis cs:10,57.884718712622);

\path [draw=forestgreen4416044, semithick]
(axis cs:100,53.7297615050293)
--(axis cs:100,59.8702384949707);

\addplot [semithick, steelblue31119180, dotted, mark=*, mark size=3, mark options={solid}]
table {%
0.1 52.9
1 53.2
10 53.1
100 55.2
};
\addlegendentry{EPIMC\_D1}
\addplot [semithick, darkorange25512714, dotted, mark=pentagon*, mark size=3, mark options={solid}]
table {%
0.1 51.4
1 50.9
10 47.5
100 49
};
\addlegendentry{EPIMC\_D2}
\addplot [semithick, forestgreen4416044, dotted, mark=square*, mark size=3, mark options={solid}]
table {%
0.1 52.3
1 53.9
10 54.8
100 56.8
};
\addlegendentry{EPIMC\_D3}
\end{axis}

\end{tikzpicture}}}
        \subfloat[\centering Battleship]{\scalebox{0.55}{% This file was created with tikzplotlib v0.10.1.
\begin{tikzpicture}

\definecolor{darkgray176}{RGB}{176,176,176}
\definecolor{darkorange25512714}{RGB}{255,127,14}
\definecolor{forestgreen4416044}{RGB}{44,160,44}
\definecolor{steelblue31119180}{RGB}{31,119,180}

\begin{axis}[
log basis x={10},
tick align=outside,
tick pos=left,
x grid style={darkgray176},
xlabel={Time},
xmin=0.0707945784384138, xmax=141.253754462275,
xmode=log,
xtick style={color=black},
xtick={0.001,0.01,0.1,1,10,100,1000,10000},
xticklabels={
  $\displaystyle{10^{-3}}$,
  $\displaystyle{10^{-2}}$,
  $\displaystyle{10^{-1}}$,
  $\displaystyle{10^{0}}$,
  $\displaystyle{10^{1}}$,
  $\displaystyle{10^{2}}$,
  $\displaystyle{10^{3}}$,
  $\displaystyle{10^{4}}$
},
ytick=\empty,
ymin=0, ymax=100
]
\path [draw=steelblue31119180, semithick]
(axis cs:0.1,44.4185691615638)
--(axis cs:0.1,53.1814308384362);

\path [draw=steelblue31119180, semithick]
(axis cs:1,50.437548960504)
--(axis cs:1,59.162451039496);

\path [draw=steelblue31119180, semithick]
(axis cs:10,52.2556565734279)
--(axis cs:10,60.9443434265721);

\path [draw=steelblue31119180, semithick]
(axis cs:100,51.6489765801596)
--(axis cs:100,60.3510234198404);

\path [draw=darkorange25512714, semithick]
(axis cs:0.1,47.8215512664872)
--(axis cs:0.1,56.5784487335128);

\path [draw=darkorange25512714, semithick]
(axis cs:1,49.0274512915234)
--(axis cs:1,57.7725487084766);

\path [draw=darkorange25512714, semithick]
(axis cs:10,52.4580270806925)
--(axis cs:10,61.1419729193075);

\path [draw=darkorange25512714, semithick]
(axis cs:100,52.4580270806925)
--(axis cs:100,61.1419729193075);

\path [draw=forestgreen4416044, semithick]
(axis cs:0.1,45.6173067641004)
--(axis cs:0.1,54.3826932358996);

\path [draw=forestgreen4416044, semithick]
(axis cs:1,42.8241842232562)
--(axis cs:1,51.5758157767438);

\path [draw=forestgreen4416044, semithick]
(axis cs:10,47.8215512664872)
--(axis cs:10,56.5784487335128);

\path [draw=forestgreen4416044, semithick]
(axis cs:100,47.6208143222741)
--(axis cs:100,56.3791856777259);

\addplot [semithick, steelblue31119180, dotted, mark=*, mark size=3, mark options={solid}]
table {%
0.1 48.8
1 54.8
10 56.6
100 56
};
% \addlegendentry{DL_D3}
\addplot [semithick, darkorange25512714, dotted, mark=pentagon*, mark size=3, mark options={solid}]
table {%
0.1 52.2
1 53.4
10 56.8
100 56.8
};
% \addlegendentry{DL_D3}
\addplot [semithick, forestgreen4416044, dotted, mark=square*, mark size=3, mark options={solid}]
table {%
0.1 50
1 47.2
10 52.2
100 52
};
% \addlegendentry{DL_D3}
\end{axis}

\end{tikzpicture}}}
    
        \subfloat[\centering Dark Chess]{\scalebox{0.55}{% This file was created with tikzplotlib v0.10.1.
\begin{tikzpicture}

\definecolor{darkgray176}{RGB}{176,176,176}
\definecolor{darkorange25512714}{RGB}{255,127,14}
\definecolor{forestgreen4416044}{RGB}{44,160,44}
\definecolor{steelblue31119180}{RGB}{31,119,180}

\begin{axis}[
log basis x={10},
tick align=outside,
tick pos=left,
title={DarkChess},
x grid style={darkgray176},
xlabel={Time},
xmin=0.0630957344480193, xmax=1584.89319246111,
xmode=log,
xtick style={color=black},
xtick={0.001,0.01,0.1,1,10,100,1000,10000,100000},
xticklabels={
  $\displaystyle{10^{-3}}$,
  $\displaystyle{10^{-2}}$,
  $\displaystyle{10^{-1}}$,
  $\displaystyle{10^{0}}$,
  $\displaystyle{10^{1}}$,
  $\displaystyle{10^{2}}$,
  $\displaystyle{10^{3}}$,
  $\displaystyle{10^{4}}$,
  $\displaystyle{10^{5}}$
},
ytick=\empty,
ymin=0, ymax=100
]
\path [draw=steelblue31119180, semithick]
(axis cs:0.1,29.6537220315083)
--(axis cs:0.1,37.9462779684917);

\path [draw=steelblue31119180, semithick]
(axis cs:1,39.5500450429918)
--(axis cs:1,48.2499549570082);

\path [draw=steelblue31119180, semithick]
(axis cs:10,38.3642691190527)
--(axis cs:10,47.0357308809473);

\path [draw=steelblue31119180, semithick]
(axis cs:100,39.3522357708818)
--(axis cs:100,48.0477642291182);

\path [draw=steelblue31119180, semithick]
(axis cs:1000,38.6604697028365)
--(axis cs:1000,47.3395302971635);

\path [draw=darkorange25512714, semithick]
(axis cs:0.1,32.5727929750248)
--(axis cs:0.1,41.0272070249752);

\path [draw=darkorange25512714, semithick]
(axis cs:1,28.6847840746809)
--(axis cs:1,36.9152159253191);

\path [draw=darkorange25512714, semithick]
(axis cs:10,22.155201331669)
--(axis cs:10,29.844798668331);

\path [draw=darkorange25512714, semithick]
(axis cs:100,15.7475502679981)
--(axis cs:100,22.6524497320019);

\path [draw=darkorange25512714, semithick]
(axis cs:1000,16.12041756988)
--(axis cs:1000,23.07958243012);

\path [draw=forestgreen4416044, semithick]
(axis cs:0.1,38.8580270806925)
--(axis cs:0.1,47.5419729193075);

\path [draw=forestgreen4416044, semithick]
(axis cs:1,47.4201476700692)
--(axis cs:1,56.1798523299308);

\path [draw=forestgreen4416044, semithick]
(axis cs:10,49.4299823121639)
--(axis cs:10,58.1700176878361);

\path [draw=forestgreen4416044, semithick]
(axis cs:100,48.2232361763513)
--(axis cs:100,56.9767638236487);

\path [draw=forestgreen4416044, semithick]
(axis cs:1000,48.7257384111144)
--(axis cs:1000,57.4742615888856);

\addplot [semithick, steelblue31119180, dotted, mark=*, mark size=3, mark options={solid}]
table {%
0.1 33.8
1 43.9
10 42.7
100 43.7
1000 43
};
% \addlegendentry{DL_D3}
\addplot [semithick, darkorange25512714, dotted, mark=pentagon*, mark size=3, mark options={solid}]
table {%
0.1 36.8
1 32.8
10 26
100 19.2
1000 19.6
};
% \addlegendentry{DL_D3}
\addplot [semithick, forestgreen4416044, dotted, mark=square*, mark size=3, mark options={solid}]
table {%
0.1 43.2
1 51.8
10 53.8
100 52.6
1000 53.1
};
% \addlegendentry{DL_D3}
\end{axis}

\end{tikzpicture}}}
        \subfloat[\centering Phantom Tic-Tac-Toe]{\scalebox{0.55}{% This file was created with tikzplotlib v0.10.1.
\begin{tikzpicture}

\definecolor{darkgray176}{RGB}{176,176,176}
\definecolor{darkorange25512714}{RGB}{255,127,14}
\definecolor{forestgreen4416044}{RGB}{44,160,44}
\definecolor{steelblue31119180}{RGB}{31,119,180}

\begin{axis}[
log basis x={10},
tick align=outside,
tick pos=left,
title={TTT},
x grid style={darkgray176},
xlabel={Time},
xmin=0.0707945784384138, xmax=141.253754462275,
xmode=log,
xtick style={color=black},
xtick={0.001,0.01,0.1,1,10,100,1000,10000},
xticklabels={
  $\displaystyle{10^{-3}}$,
  $\displaystyle{10^{-2}}$,
  $\displaystyle{10^{-1}}$,
  $\displaystyle{10^{0}}$,
  $\displaystyle{10^{1}}$,
  $\displaystyle{10^{2}}$,
  $\displaystyle{10^{3}}$,
  $\displaystyle{10^{4}}$
},
ytick=\empty,
ymin=0, ymax=100
]
\path [draw=steelblue31119180, semithick]
(axis cs:0.1,32.2398419069246)
--(axis cs:0.1,38.1601580930754);

\path [draw=steelblue31119180, semithick]
(axis cs:1,31.9875409328992)
--(axis cs:1,40.4124590671008);

\path [draw=steelblue31119180, semithick]
(axis cs:10,31.0137042780042)
--(axis cs:10,39.3862957219958);

\path [draw=steelblue31119180, semithick]
(axis cs:100,32.8656832088281)
--(axis cs:100,41.3343167911719);

\path [draw=darkorange25512714, semithick]
(axis cs:0.1,36.7872962392485)
--(axis cs:0.1,45.4127037607515);

\path [draw=darkorange25512714, semithick]
(axis cs:1,36.9841618899685)
--(axis cs:1,45.6158381100315);

\path [draw=darkorange25512714, semithick]
(axis cs:10,38.0682314964901)
--(axis cs:10,46.7317685035099);

\path [draw=darkorange25512714, semithick]
(axis cs:100,38.8580270806925)
--(axis cs:100,47.5419729193075);

\path [draw=forestgreen4416044, semithick]
(axis cs:0.1,42.4262917298933)
--(axis cs:0.1,51.1737082701067);

\path [draw=forestgreen4416044, semithick]
(axis cs:1,42.4262917298933)
--(axis cs:1,51.1737082701067);

\path [draw=forestgreen4416044, semithick]
(axis cs:10,39.7479260337168)
--(axis cs:10,48.4520739662832);

\path [draw=forestgreen4416044, semithick]
(axis cs:100,41.7306592570503)
--(axis cs:100,50.4693407429497);

\addplot [semithick, steelblue31119180, dotted, mark=*, mark size=3, mark options={solid}]
table {%
0.1 35.2
1 36.2
10 35.2
100 37.1
};
% \addlegendentry{DL_D3}
\addplot [semithick, darkorange25512714, dotted, mark=pentagon*, mark size=3, mark options={solid}]
table {%
0.1 41.1
1 41.3
10 42.4
100 43.2
};
% \addlegendentry{DL_D3}
\addplot [semithick, forestgreen4416044, dotted, mark=square*, mark size=3, mark options={solid}]
table {%
0.1 46.8
1 46.8
10 44.1
100 46.1
};
% \addlegendentry{DL_D3}
\end{axis}

\end{tikzpicture}}}

        \subfloat[\centering Dark Hex \hspace{20mm}~]{\scalebox{0.55}{% This file was created with tikzplotlib v0.10.1.
\begin{tikzpicture}

\definecolor{darkgray176}{RGB}{176,176,176}
\definecolor{darkorange25512714}{RGB}{255,127,14}
\definecolor{forestgreen4416044}{RGB}{44,160,44}
\definecolor{steelblue31119180}{RGB}{31,119,180}

\begin{axis}[
log basis x={10},
tick align=outside,
tick pos=left,
title={DarkHex},
x grid style={darkgray176},
xlabel={Time},
xmin=0.0630957344480193, xmax=1584.89319246111,
xmode=log,
xtick style={color=black},
xtick={0.001,0.01,0.1,1,10,100,1000,10000,100000},
xticklabels={
  $\displaystyle{10^{-3}}$,
  $\displaystyle{10^{-2}}$,
  $\displaystyle{10^{-1}}$,
  $\displaystyle{10^{0}}$,
  $\displaystyle{10^{1}}$,
  $\displaystyle{10^{2}}$,
  $\displaystyle{10^{3}}$,
  $\displaystyle{10^{4}}$,
  $\displaystyle{10^{5}}$
},
ytick=\empty,
ymin=0, ymax=100
]
\path [draw=steelblue31119180, semithick]
(axis cs:0.1,41.4327962777081)
--(axis cs:0.1,50.1672037222919);

\path [draw=steelblue31119180, semithick]
(axis cs:1,39.8323714462145)
--(axis cs:1,45.9676285537855);

\path [draw=steelblue31119180, semithick]
(axis cs:10,34.5286747499166)
--(axis cs:10,43.0713252500834);

\path [draw=steelblue31119180, semithick]
(axis cs:100,39.4511314354191)
--(axis cs:100,48.1488685645809);

\path [draw=steelblue31119180, semithick]
(axis cs:1000,36.2954544434981)
--(axis cs:1000,44.9045455565019);

\path [draw=darkorange25512714, semithick]
(axis cs:0.1,44.219025116712)
--(axis cs:0.1,52.980974883288);

\path [draw=darkorange25512714, semithick]
(axis cs:1,45.8173418257865)
--(axis cs:1,54.5826581742135);

\path [draw=darkorange25512714, semithick]
(axis cs:10,46.017447012528)
--(axis cs:10,54.782552987472);

\path [draw=darkorange25512714, semithick]
(axis cs:100,49.0274512915234)
--(axis cs:100,57.7725487084766);

\path [draw=darkorange25512714, semithick]
(axis cs:1000,37.4766469457145)
--(axis cs:1000,46.1233530542855);

\path [draw=forestgreen4416044, semithick]
(axis cs:0.1,36.4921358015833)
--(axis cs:0.1,45.1078641984167);

\path [draw=forestgreen4416044, semithick]
(axis cs:1,42.4262917298933)
--(axis cs:1,51.1737082701067);

\path [draw=forestgreen4416044, semithick]
(axis cs:10,47.4201476700692)
--(axis cs:10,56.1798523299308);

\path [draw=forestgreen4416044, semithick]
(axis cs:100,62.67210784637)
--(axis cs:100,70.92789215363);

\path [draw=forestgreen4416044, semithick]
(axis cs:1000,60.2029996845366)
--(axis cs:1000,68.5970003154634);

\addplot [semithick, steelblue31119180, dotted, mark=*, mark size=3, mark options={solid}]
table {%
0.1 45.8
1 42.9
10 38.8
100 43.8
1000 40.6
};
% \addlegendentry{DL_D3}
\addplot [semithick, darkorange25512714, dotted, mark=pentagon*, mark size=3, mark options={solid}]
table {%
0.1 48.6
1 50.2
10 50.4
100 53.4
1000 41.8
};
% \addlegendentry{DL_D3}
\addplot [semithick, forestgreen4416044, dotted, mark=square*, mark size=3, mark options={solid}]
table {%
0.1 40.8
1 46.8
10 51.8
100 66.8
1000 64.4
};
% \addlegendentry{DL_D3}
\end{axis}

\end{tikzpicture}}}
    
        \caption{\centering Winning rate of online algorithms. The opponent is IS-MCTS with one second of budget.}
    
    \label{fig:vsISfull}
\end{figure}

As discussed in the main paper, Card Game and Battleship primarily involve public actions, resulting in similar performance across different methods. Conversely, Dark Chess, Dark Hex, and Phantom Tic-Tac-Toe involve private actions, where changing the depth significantly impacts performance. Increasing the depth to 3 notably improves performance in these three games.

Interestingly, in Dark Chess, we observe that performance at depth 2 is lower than that of PIMC. This can be attributed to the fact that while increasing depth reduces strategy fusion, the fused strategy at depth 1 may have been advantageous against IS-MCTS. Thus, moving to depth 2 removed this beneficial fusion, leading to a reduction in performance.

\subsection{Subgame Resolution}
\label{exp:subgameappendix}

\begin{figure}[!htbp]
    \centering
    
        \subfloat[\centering Card Game]{\scalebox{0.55}{% This file was created with tikzplotlib v0.10.1.
\begin{tikzpicture}

\definecolor{darkgray176}{RGB}{176,176,176}
\definecolor{darkorange25512714}{RGB}{255,127,14}
\definecolor{lightgray204}{RGB}{204,204,204}
\definecolor{steelblue31119180}{RGB}{31,119,180}

\begin{axis}[
legend cell align={left},
legend style={fill opacity=0.8, draw opacity=1, text opacity=1, draw=lightgray204},
log basis x={10},
tick align=outside,
tick pos=left,
x grid style={darkgray176},
xlabel={Budget},
xmin=0.0707945784384138, xmax=141.253754462275,
xmode=log,
xtick style={color=black},
xtick={0.001,0.01,0.1,1,10,100,1000,10000},
xticklabels={
  \(\displaystyle {10^{-3}}\),
  \(\displaystyle {10^{-2}}\),
  \(\displaystyle {10^{-1}}\),
  \(\displaystyle {10^{0}}\),
  \(\displaystyle {10^{1}}\),
  \(\displaystyle {10^{2}}\),
  \(\displaystyle {10^{3}}\),
  \(\displaystyle {10^{4}}\)
},
y grid style={darkgray176},
ylabel={Winning Rate},
ymin=0, ymax=100,
ytick style={color=black}
]
\path [draw=steelblue31119180, semithick]
(axis cs:0.1,38.9408926792281)
--(axis cs:0.1,45.0591073207719);

\path [draw=steelblue31119180, semithick]
(axis cs:1,38.6439646310947)
--(axis cs:1,44.7560353689053);

\path [draw=steelblue31119180, semithick]
(axis cs:10,45.1029767169102)
--(axis cs:10,51.2970232830897);

\path [draw=steelblue31119180, semithick]
(axis cs:100,46.8009740911054)
--(axis cs:100,52.9990259088946);

\path [draw=darkorange25512714, semithick]
(axis cs:0.1,45.8017179495727)
--(axis cs:0.1,51.9982820504273);

\path [draw=darkorange25512714, semithick]
(axis cs:1,47.3010670636492)
--(axis cs:1,53.4989329363508);

\path [draw=darkorange25512714, semithick]
(axis cs:10,47.50119103138)
--(axis cs:10,53.69880896862);

\path [draw=darkorange25512714, semithick]
(axis cs:100,48.6027596515608)
--(axis cs:100,54.7972403484392);

\addplot [semithick, steelblue31119180, dotted, mark=*, mark size=3, mark options={solid}]
table {%
0.1 42
1 41.7
10 48.2
100 49.9
};
\addlegendentry{EPIMC\_CFR}
\addplot [semithick, darkorange25512714, dotted, mark=square*, mark size=3, mark options={solid}]
table {%
0.1 48.9
1 50.4
10 50.6
100 51.7
};
\addlegendentry{EPIMC\_ISS}
\end{axis}

\end{tikzpicture}}}
        \subfloat[\centering Battleship]{\scalebox{0.55}{% This file was created with tikzplotlib v0.10.1.
\begin{tikzpicture}

\definecolor{darkgray176}{RGB}{176,176,176}
\definecolor{darkorange25512714}{RGB}{255,127,14}
\definecolor{lightgray204}{RGB}{204,204,204}
\definecolor{steelblue31119180}{RGB}{31,119,180}

\begin{axis}[
legend cell align={left},
legend style={fill opacity=0.8, draw opacity=1, text opacity=1, draw=lightgray204},
log basis x={10},
tick align=outside,
tick pos=left,
x grid style={darkgray176},
xlabel={Budget},
xmin=0.0707945784384138, xmax=141.253754462275,
xmode=log,
xtick style={color=black},
xtick={0.001,0.01,0.1,1,10,100,1000,10000},
xticklabels={
  \(\displaystyle {10^{-3}}\),
  \(\displaystyle {10^{-2}}\),
  \(\displaystyle {10^{-1}}\),
  \(\displaystyle {10^{0}}\),
  \(\displaystyle {10^{1}}\),
  \(\displaystyle {10^{2}}\),
  \(\displaystyle {10^{3}}\),
  \(\displaystyle {10^{4}}\)
},
y grid style={darkgray176},
ymin=0, ymax=100,
ytick=\empty,
ytick style={color=black}
]
\path [draw=steelblue31119180, semithick]
(axis cs:0.1,40.4272662985543)
--(axis cs:0.1,46.5727337014457);

\path [draw=steelblue31119180, semithick]
(axis cs:1,44.8624707457956)
--(axis cs:1,49.2375292542044);

\path [draw=steelblue31119180, semithick]
(axis cs:10,48.4023627714014)
--(axis cs:10,54.5976372285986);

\path [draw=steelblue31119180, semithick]
(axis cs:100,48.903448111205)
--(axis cs:100,55.096551888795);

\path [draw=darkorange25512714, semithick]
(axis cs:0.1,43.5081411571677)
--(axis cs:0.1,49.6918588428323);

\path [draw=darkorange25512714, semithick]
(axis cs:1,44.903448111205)
--(axis cs:1,51.096551888795);

\path [draw=darkorange25512714, semithick]
(axis cs:10,45.0032062025379)
--(axis cs:10,51.1967937974621);

\path [draw=darkorange25512714, semithick]
(axis cs:100,44.2054895805637)
--(axis cs:100,50.3945104194363);

\addplot [semithick, steelblue31119180, dotted, mark=*, mark size=3, mark options={solid}]
table {%
0.1 43.5
1 47.05
10 51.5
100 52
};
%\addlegendentry{DL_ISS}
\addplot [semithick, darkorange25512714, dotted, mark=square*, mark size=3, mark options={solid}]
table {%
0.1 46.6
1 48
10 48.1
100 47.3
};
%\addlegendentry{DL_ISS}
\end{axis}

\end{tikzpicture}}}
     
        % \subfloat[\centering Dark Chess]{\scalebox{0.6}{\input{Image/DarkChess_CFR}}}
        % \subfloat[\centering Phantom Tic-Tac-Toe]{\scalebox{0.6}{\input{Image/TTT_CFR}}}
        %  \subfloat[\centering Dark Hex \hspace{20mm}~]{\scalebox{0.6}{\input{Image/DarkHex_CFR}}}
    
        \caption{\centering Winning rate of EPIMC when the subgame is `CFR' or `ISS'. The opponent is PIMC with one second of budget.}
    
    \label{fig:_subgameappendix}
\end{figure}
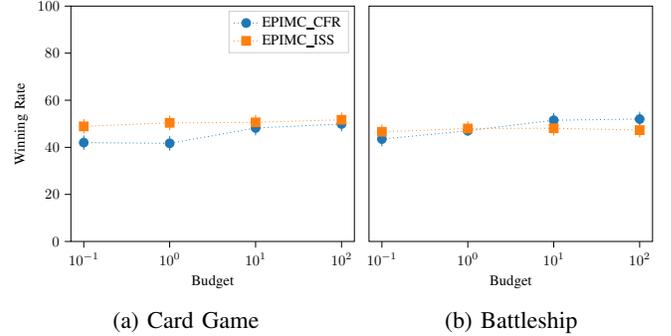

In Figure \ref{fig:_subgameappendix}, we analyze the impact of using CFR+ instead of Information Set Search (ISS) in the subgame resolution. CFR+ is used with $1000$ iterations. Compared with the main paper, Card Game and Battleship have been added. As the game have public actions, the performance of the two methods is similar.

\subsection{Leaf Evaluator}

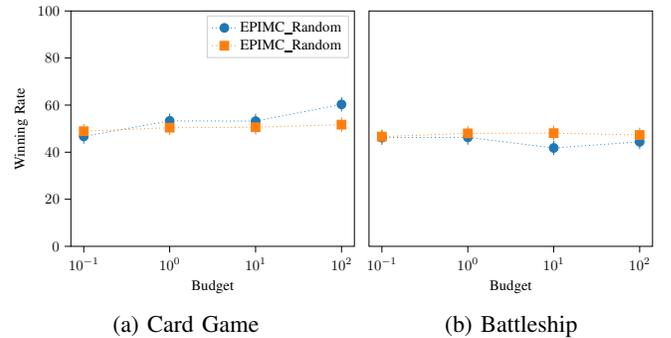
\begin{figure}[!htbp]
\centering

    \subfloat[\centering Card Game]{\scalebox{0.55}{% This file was created with tikzplotlib v0.10.1.
\begin{tikzpicture}

\definecolor{darkgray176}{RGB}{176,176,176}
\definecolor{darkorange25512714}{RGB}{255,127,14}
\definecolor{lightgray204}{RGB}{204,204,204}
\definecolor{steelblue31119180}{RGB}{31,119,180}

\begin{axis}[
legend cell align={left},
legend style={fill opacity=0.8, draw opacity=1, text opacity=1, draw=lightgray204},
log basis x={10},
tick align=outside,
tick pos=left,
x grid style={darkgray176},
xlabel={Budget},
xmin=0.0707945784384138, xmax=141.253754462275,
xmode=log,
xtick style={color=black},
xtick={0.001,0.01,0.1,1,10,100,1000,10000},
xticklabels={
  \(\displaystyle {10^{-3}}\),
  \(\displaystyle {10^{-2}}\),
  \(\displaystyle {10^{-1}}\),
  \(\displaystyle {10^{0}}\),
  \(\displaystyle {10^{1}}\),
  \(\displaystyle {10^{2}}\),
  \(\displaystyle {10^{3}}\),
  \(\displaystyle {10^{4}}\)
},
y grid style={darkgray176},
ylabel={Winning Rate},
ymin=0, ymax=100,
ytick style={color=black}
]
\path [draw=steelblue31119180, semithick]
(axis cs:0.1,43.6077249514314)
--(axis cs:0.1,49.7922750485686);

\path [draw=steelblue31119180, semithick]
(axis cs:1,50.2077249514314)
--(axis cs:1,56.3922750485686);

\path [draw=steelblue31119180, semithick]
(axis cs:10,50.1073212232759)
--(axis cs:10,56.2926787767242);

\path [draw=steelblue31119180, semithick]
(axis cs:100,57.2674359601156)
--(axis cs:100,63.3325640398844);

\path [draw=darkorange25512714, semithick]
(axis cs:0.1,45.8017179495727)
--(axis cs:0.1,51.9982820504273);

\path [draw=darkorange25512714, semithick]
(axis cs:1,47.3010670636492)
--(axis cs:1,53.4989329363508);

\path [draw=darkorange25512714, semithick]
(axis cs:10,47.50119103138)
--(axis cs:10,53.69880896862);

\path [draw=darkorange25512714, semithick]
(axis cs:100,48.6027596515608)
--(axis cs:100,54.7972403484392);

\addplot [semithick, steelblue31119180, dotted, mark=*, mark size=3, mark options={solid}]
table {%
0.1 46.7
1 53.3
10 53.2
100 60.3
};
\addlegendentry{EPIMC\_Random}
\addplot [semithick, darkorange25512714, dotted, mark=square*, mark size=3, mark options={solid}]
table {%
0.1 48.9
1 50.4
10 50.6
100 51.7
};
\addlegendentry{EPIMC\_Random}
\end{axis}

\end{tikzpicture}}}
    \subfloat[\centering Battleship]{\scalebox{0.55}{% This file was created with tikzplotlib v0.10.1.
\begin{tikzpicture}

\definecolor{darkgray176}{RGB}{176,176,176}
\definecolor{darkorange25512714}{RGB}{255,127,14}
\definecolor{lightgray204}{RGB}{204,204,204}
\definecolor{steelblue31119180}{RGB}{31,119,180}

\begin{axis}[
legend cell align={left},
legend style={fill opacity=0.8, draw opacity=1, text opacity=1, draw=lightgray204},
log basis x={10},
tick align=outside,
tick pos=left,
x grid style={darkgray176},
xlabel={Budget},
xmin=0.0707945784384138, xmax=141.253754462275,
xmode=log,
xtick style={color=black},
xtick={0.001,0.01,0.1,1,10,100,1000,10000},
xticklabels={
  \(\displaystyle {10^{-3}}\),
  \(\displaystyle {10^{-2}}\),
  \(\displaystyle {10^{-1}}\),
  \(\displaystyle {10^{0}}\),
  \(\displaystyle {10^{1}}\),
  \(\displaystyle {10^{2}}\),
  \(\displaystyle {10^{3}}\),
  \(\displaystyle {10^{4}}\)
},
y grid style={darkgray176},
ymin=0, ymax=100,
ytick=\empty,
ytick style={color=black}
]
\path [draw=steelblue31119180, semithick]
(axis cs:0.1,43.109930859026)
--(axis cs:0.1,49.290069140974);

\path [draw=steelblue31119180, semithick]
(axis cs:1,43.2094646910284)
--(axis cs:1,49.3905353089716);

\path [draw=steelblue31119180, semithick]
(axis cs:10,38.7429277378511)
--(axis cs:10,44.8570722621489);

\path [draw=steelblue31119180, semithick]
(axis cs:100,41.3204663625802)
--(axis cs:100,47.4795336374198);

\path [draw=darkorange25512714, semithick]
(axis cs:0.1,43.5081411571677)
--(axis cs:0.1,49.6918588428323);

\path [draw=darkorange25512714, semithick]
(axis cs:1,44.903448111205)
--(axis cs:1,51.096551888795);

\path [draw=darkorange25512714, semithick]
(axis cs:10,45.0032062025379)
--(axis cs:10,51.1967937974621);

\path [draw=darkorange25512714, semithick]
(axis cs:100,44.2054895805637)
--(axis cs:100,50.3945104194363);

\addplot [semithick, steelblue31119180, dotted, mark=*, mark size=3, mark options={solid}]
table {%
0.1 46.2
1 46.3
10 41.8
100 44.4
};
%\addlegendentry{DL_Random}
\addplot [semithick, darkorange25512714, dotted, mark=square*, mark size=3, mark options={solid}]
table {%
0.1 46.6
1 48
10 48.1
100 47.3
};
%\addlegendentry{DL_Random}
\end{axis}

\end{tikzpicture}}}
    
    % \subfloat[\centering Phantom Tic-Tac-Toe]{\scalebox{0.55}{\input{Image/TTT_MaxN}}}
    % \subfloat[\centering Dark Hex \hspace{20mm}~]{\scalebox{0.55}{\input{Image/DarkHex_MaxN}}}

    \caption{\centering Winning rate of EPIMC when the perfect information leaf evaluator is `Minimax' or `RandomRollout'. The opponent is PIMC with one second of budget.}

\label{fig:_leafappendix}
\end{figure}

In Figure \ref{fig:_leafappendix}, we analyze the impact of using Minimax with Alpha-Beta instead of using Random Rollout for the perfect information leaf evaluator. Compared with the main paper, Card Game and Battleship have been added. As the game have public actions, the performance of the two methods is similar.

\subsection{Against other online algorithms}

In Figure \ref{fig:vsPIMCfull}, we add the game Battleship and Card Game against other online algorithms. As in the main paper, we observe that EPIMC and IS-MCTS obtain the best performance against other online algorithms and, OOS and RecPIMC obtain poorer performance. However, RecPIMC achieves performance close to EPIMC and IS-MCTS in Battleship. 

\begin{figure}[!htbp]
\centering
	\subfloat[\centering Card Game]{\scalebox{0.55}{% This file was created with tikzplotlib v0.10.1.
\begin{tikzpicture}

\definecolor{crimson2143940}{RGB}{214,39,40}
\definecolor{darkgray176}{RGB}{176,176,176}
\definecolor{darkorange25512714}{RGB}{255,127,14}
\definecolor{forestgreen4416044}{RGB}{44,160,44}
\definecolor{lightgray204}{RGB}{204,204,204}
\definecolor{mediumpurple148103189}{RGB}{148,103,189}
\definecolor{sienna1408675}{RGB}{140,86,75}
\definecolor{steelblue31119180}{RGB}{31,119,180}

\begin{axis}[
legend cell align={left},
legend style={fill opacity=0.8, draw opacity=1, text opacity=1, draw=lightgray204},
log basis x={10},
tick align=outside,
tick pos=left,
x grid style={darkgray176},
xlabel={Budget},
xmin=0.0707945784384138, xmax=141.253754462275,
xmode=log,
xtick style={color=black},
xtick={0.001,0.01,0.1,1,10,100,1000,10000},
xticklabels={
  \(\displaystyle {10^{-3}}\),
  \(\displaystyle {10^{-2}}\),
  \(\displaystyle {10^{-1}}\),
  \(\displaystyle {10^{0}}\),
  \(\displaystyle {10^{1}}\),
  \(\displaystyle {10^{2}}\),
  \(\displaystyle {10^{3}}\),
  \(\displaystyle {10^{4}}\)
},
y grid style={darkgray176},
ylabel={Winning Rate},
ymin=0, ymax=100,
ytick style={color=black}
]
\path [draw=steelblue31119180, semithick]
(axis cs:0.1,46.5010670636492)
--(axis cs:0.1,52.6989329363508);

\path [draw=steelblue31119180, semithick]
(axis cs:1,46.700992685391)
--(axis cs:1,52.899007314609);

\path [draw=steelblue31119180, semithick]
(axis cs:10,47.0009740911054)
--(axis cs:10,53.1990259088946);

\path [draw=steelblue31119180, semithick]
(axis cs:100,47.0009740911054)
--(axis cs:100,53.1990259088946);

\path [draw=darkorange25512714, semithick]
(axis cs:0.1,45.4023627714014)
--(axis cs:0.1,51.5976372285986);

\path [draw=darkorange25512714, semithick]
(axis cs:1,46.5010670636492)
--(axis cs:1,52.6989329363508);

\path [draw=darkorange25512714, semithick]
(axis cs:10,43.2094646910284)
--(axis cs:10,49.3905353089716);

\path [draw=darkorange25512714, semithick]
(axis cs:100,43.2094646910284)
--(axis cs:100,49.3905353089716);

\path [draw=forestgreen4416044, semithick]
(axis cs:0.1,45.8017179495727)
--(axis cs:0.1,51.9982820504273);

\path [draw=forestgreen4416044, semithick]
(axis cs:1,47.3010670636492)
--(axis cs:1,53.4989329363508);

\path [draw=forestgreen4416044, semithick]
(axis cs:10,47.50119103138)
--(axis cs:10,53.69880896862);

\path [draw=forestgreen4416044, semithick]
(axis cs:100,48.6027596515608)
--(axis cs:100,54.7972403484392);

\path [draw=crimson2143940, semithick]
(axis cs:0.1,35.7797169510127)
--(axis cs:0.1,41.8202830489873);

\path [draw=crimson2143940, semithick]
(axis cs:1,43.7073212232759)
--(axis cs:1,49.8926787767242);

\path [draw=crimson2143940, semithick]
(axis cs:10,47.4011228485143)
--(axis cs:10,53.5988771514857);

\path [draw=crimson2143940, semithick]
(axis cs:100,48.3021829518191)
--(axis cs:100,54.4978170481809);

\path [draw=mediumpurple148103189, semithick]
(axis cs:0.1,15.2396521476698)
--(axis cs:0.1,19.9603478523302);

\path [draw=mediumpurple148103189, semithick]
(axis cs:1,15.3343928610186)
--(axis cs:1,20.0656071389814);

\path [draw=mediumpurple148103189, semithick]
(axis cs:10,15.4291614445517)
--(axis cs:10,20.1708385554483);

\path [draw=mediumpurple148103189, semithick]
(axis cs:100,14.577263548312)
--(axis cs:100,19.222736451688);

\path [draw=sienna1408675, semithick]
(axis cs:0.1,15.8085108371561)
--(axis cs:0.1,20.5914891628439);

\path [draw=sienna1408675, semithick]
(axis cs:1,15.9983480885024)
--(axis cs:1,20.8016519114976);

\path [draw=sienna1408675, semithick]
(axis cs:10,15.523957714181)
--(axis cs:10,20.276042285819);

\path [draw=sienna1408675, semithick]
(axis cs:100,16.093306749916)
--(axis cs:100,20.906693250084);

\addplot [semithick, steelblue31119180, dotted, mark=*, mark size=3, mark options={solid}]
table {%
0.1 49.6
1 49.8
10 50.1
100 50.1
};
\addlegendentry{PIMC}
\addplot [semithick, darkorange25512714, dotted, mark=pentagon*, mark size=3, mark options={solid}]
table {%
0.1 48.5
1 49.6
10 46.3
100 46.3
};
\addlegendentry{EPIMC\_D2}
\addplot [semithick, forestgreen4416044, dotted, mark=square*, mark size=3, mark options={solid}]
table {%
0.1 48.9
1 50.4
10 50.6
100 51.7
};
\addlegendentry{EPIMC\_D3}
\addplot [semithick, sienna1408675, dotted, mark=triangle*, mark size=3, mark options={solid}]
table {%
0.1 38.8
1 46.8
10 50.5
100 51.4
};
\addlegendentry{IS-MCTS}
\addplot [semithick, crimson2143940, dotted, mark=diamond*, mark size=3, mark options={solid}]
table {%
0.1 17.6
1 17.7
10 17.8
100 16.9
};
\addlegendentry{OOS2}
\addplot [semithick, mediumpurple148103189, dotted, mark=oplus, mark size=3, mark options={solid}]
table {%
0.1 18.2
1 18.4
10 17.9
100 18.5
};
\addlegendentry{IIMC}
\addplot [semithick, black, dotted, mark=halfcircle, mark size=3, mark options={solid}]
table {%
0.1 16.9
1 16.9
10 16.9
100 16.9
};
\addlegendentry{Random}

\end{axis}

\end{tikzpicture}}}
	\subfloat[\centering Battleship]{\scalebox{0.55}{% This file was created with tikzplotlib v0.10.1.
\begin{tikzpicture}

\definecolor{crimson2143940}{RGB}{214,39,40}
\definecolor{darkgray176}{RGB}{176,176,176}
\definecolor{darkorange25512714}{RGB}{255,127,14}
\definecolor{forestgreen4416044}{RGB}{44,160,44}
\definecolor{lightgray204}{RGB}{204,204,204}
\definecolor{mediumpurple148103189}{RGB}{148,103,189}
\definecolor{sienna1408675}{RGB}{140,86,75}
\definecolor{steelblue31119180}{RGB}{31,119,180}

\begin{axis}[
legend cell align={left},
legend style={fill opacity=0.8, draw opacity=1, text opacity=1, draw=lightgray204},
log basis x={10},
tick align=outside,
tick pos=left,
x grid style={darkgray176},
xlabel={Budget},
xmin=0.0707945784384138, xmax=141.253754462275,
xmode=log,
xtick style={color=black},
xtick={0.001,0.01,0.1,1,10,100,1000,10000},
xticklabels={
  \(\displaystyle {10^{-3}}\),
  \(\displaystyle {10^{-2}}\),
  \(\displaystyle {10^{-1}}\),
  \(\displaystyle {10^{0}}\),
  \(\displaystyle {10^{1}}\),
  \(\displaystyle {10^{2}}\),
  \(\displaystyle {10^{3}}\),
  \(\displaystyle {10^{4}}\)
},
y grid style={darkgray176},
ymin=0, ymax=100,
ytick=\empty,
ytick style={color=black}
]
\path [draw=steelblue31119180, semithick]
(axis cs:0.1,46.601023676115)
--(axis cs:0.1,52.798976323885);

\path [draw=steelblue31119180, semithick]
(axis cs:1,46.2012716130645)
--(axis cs:1,52.3987283869355);

\path [draw=steelblue31119180, semithick]
(axis cs:10,44.006184856201)
--(axis cs:10,50.193815143799);

\path [draw=steelblue31119180, semithick]
(axis cs:100,45.6020155429699)
--(axis cs:100,51.7979844570301);

\path [draw=darkorange25512714, semithick]
(axis cs:0.1,44.5045400367635)
--(axis cs:0.1,50.6954599632365);

\path [draw=darkorange25512714, semithick]
(axis cs:1,45.2027596515608)
--(axis cs:1,51.3972403484392);

\path [draw=darkorange25512714, semithick]
(axis cs:10,47.201023676115)
--(axis cs:10,53.398976323885);

\path [draw=darkorange25512714, semithick]
(axis cs:100,43.9065511803167)
--(axis cs:100,50.0934488196833);

\path [draw=forestgreen4416044, semithick]
(axis cs:0.1,43.5081411571677)
--(axis cs:0.1,49.6918588428323);

\path [draw=forestgreen4416044, semithick]
(axis cs:1,44.903448111205)
--(axis cs:1,51.096551888795);

\path [draw=forestgreen4416044, semithick]
(axis cs:10,45.0032062025379)
--(axis cs:10,51.1967937974621);

\path [draw=forestgreen4416044, semithick]
(axis cs:100,44.2054895805637)
--(axis cs:100,50.3945104194363);

\path [draw=crimson2143940, semithick]
(axis cs:0.1,38.1493432680814)
--(axis cs:0.1,44.2506567319186);

\path [draw=crimson2143940, semithick]
(axis cs:1,42.2146897958228)
--(axis cs:1,48.3853102041772);

\path [draw=crimson2143940, semithick]
(axis cs:10,42.8114044123583)
--(axis cs:10,48.9885955876417);

\path [draw=crimson2143940, semithick]
(axis cs:100,46.601023676115)
--(axis cs:100,52.798976323885);

\path [draw=mediumpurple148103189, semithick]
(axis cs:0.1,28.6209007658644)
--(axis cs:0.1,34.3790992341356);

\path [draw=mediumpurple148103189, semithick]
(axis cs:1,22.1233546488188)
--(axis cs:1,27.4766453511812);

\path [draw=mediumpurple148103189, semithick]
(axis cs:10,21.3529113350702)
--(axis cs:10,26.6470886649298);

\path [draw=mediumpurple148103189, semithick]
(axis cs:100,30.6724134062337)
--(axis cs:100,36.5275865937663);

\path [draw=sienna1408675, semithick]
(axis cs:0.1,34.4993750650906)
--(axis cs:0.1,40.5006249349094);

\path [draw=sienna1408675, semithick]
(axis cs:1,35.1885060823571)
--(axis cs:1,41.2114939176429);

\path [draw=sienna1408675, semithick]
(axis cs:10,44.6042484053141)
--(axis cs:10,50.7957515946858);

\path [draw=sienna1408675, semithick]
(axis cs:100,46.900967893035)
--(axis cs:100,53.099032106965);

\addplot [semithick, steelblue31119180, dotted, mark=*, mark size=3, mark options={solid}]
table {%
0.1 49.7
1 49.3
10 47.1
100 48.7
};
%\addlegendentry{IIMC}
\addplot [semithick, darkorange25512714, dotted, mark=pentagon*, mark size=3, mark options={solid}]
table {%
0.1 47.6
1 48.3
10 50.3
100 47
};
%\addlegendentry{IIMC}
\addplot [semithick, forestgreen4416044, dotted, mark=square*, mark size=3, mark options={solid}]
table {%
0.1 46.6
1 48
10 48.1
100 47.3
};
%\addlegendentry{IIMC}
\addplot [semithick, sienna1408675, dotted, mark=triangle*, mark size=3, mark options={solid}]
table {%
0.1 41.2
1 45.3
10 45.9
100 49.7
};
%\addlegendentry{IIMC}
\addplot [semithick, crimson2143940, dotted, mark=diamond*, mark size=3, mark options={solid}]
table {%
0.1 31.5
1 24.8
10 24
100 33.6
};
%\addlegendentry{IIMC}
\addplot [semithick, mediumpurple148103189, dotted, mark=oplus, mark size=3, mark options={solid}]
table {%
0.1 37.5
1 38.2
10 47.7
100 50
};
%\addlegendentry{IIMC}
\addplot [semithick, black, dotted, mark=halfcircle, mark size=3, mark options={solid}]
table {%
0.1 33.4
1 33.4
10 33.4
100 33.4
};
\end{axis}

\end{tikzpicture}}}
 
    % \subfloat[\centering Dark Chess]{\scalebox{0.55}{\input{Image/DarkChess_vsPIMC}}}
	% \subfloat[\centering Phantom Tic-Tac-Toe]{\scalebox{0.55}{\input{Image/phantom_ttt_vsPIMC}}}
 	% \subfloat[\centering Dark Hex \hspace{20mm}~]{\scalebox{0.55}{\input{Image/DarkHex_vsPIMC}}}

    \caption{\centering Winning rate of online algorithms. The opponent is PIMC with one second of budget.}

\label{fig:vsPIMCfull}
\end{figure}
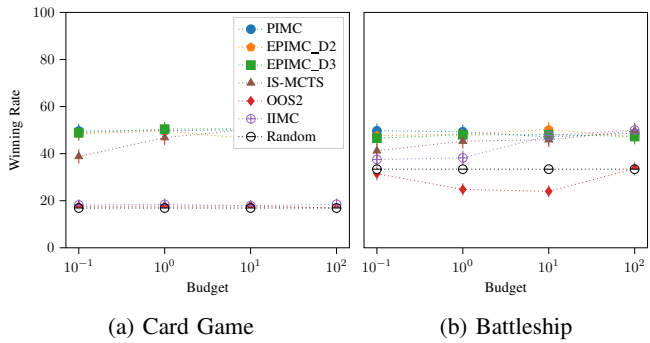

\end{document}